\newif\ifcomments
\newcommand{\yuandong}[1]{\todo[fancyline, color=red!50]{\textbf{Yuandong}: #1}\ignorespaces}
\def\imp{\emph{IMPs/b}}
\def\bnine{\texttt{baseline19}}
\def\bsix{\texttt{baseline16}}
\def\bs{\texttt{baseline}}
\def\vzero{\mathbf{0}}
\begin{document}


\title{Joint Policy Search for Multi-agent Collaboration with Imperfect Information}

\author{Yuandong Tian\thanks{\textbf{Author contribution:} Yuandong Tian proposed the theoretical framework, algorithm, proofs, did the majority of the code implementation and achieved SoTA performance. Qucheng Gong coded the logic of Bridge game, got initial results of A2C baselines, did ablation studies on the proposed method, designed simple games, provided insights in Contract Bridge, and built automatic UI tools to evaluate trained models against WBridge5. Tina Jiang performed experiments on baselines and ablations on simple games. Correspondence to: Yuandong Tian: \texttt{yuandong@fb.com}.
} \\
Facebook AI Research \\
\And Qucheng Gong \\
Facebook AI Research \\
\And Tina Jiang \\
Facebook AI Research \\
}

\maketitle


%

\newcommand{\fix}{\marginpar{FIX}}
\newcommand{\new}{\marginpar{NEW}}

\newcommand{\s}{\spadesuit}
\newcommand{\h}{\heartsuit}
\newcommand{\dd}{\diamondsuit}
\newcommand{\cc}{\clubsuit}

\def\oursfull{Joint Policy Search}
\def\ours{JPS}
\def\ouragent{\textbf{JPSBid}}
\def\ig{IG}
\def\pg{PG}
\def\pa{\mathrm{pa}}

\begin{abstract}
To learn good joint policies for multi-agent collaboration with imperfect information remains a fundamental challenge. While for two-player zero-sum games, coordinate-ascent approaches (optimizing one agent's policy at a time, e.g., self-play~\cite{silver2018alphazero,heinrich2016deep}) work with guarantees, in multi-agent cooperative setting they often converge to sub-optimal Nash equilibrium. On the other hand, directly modeling joint policy changes in imperfect information game is nontrivial due to complicated interplay of policies (e.g., upstream updates affect downstream state reachability). In this paper, we show global changes of game values can be decomposed to policy changes localized at each information set, with a novel term named \emph{policy-change density}. Based on this, we propose \emph{Joint Policy Search} (JPS) that iteratively improves joint policies of collaborative agents in imperfect information games, without re-evaluating the entire game. On multi-agent collaborative tabular games, JPS is proven to never worsen performance and can improve solutions provided by unilateral approaches (e.g, CFR~\cite{zinkevich2008regret}), outperforming algorithms designed for collaborative policy learning (e.g. BAD~\cite{BAD}). Furthermore, for real-world game with exponential states, \ours{} has an online form that naturally links with gradient updates. We test it to Contract Bridge, a 4-player imperfect-information game where a team of $2$ collaborates to compete against the other. In its bidding phase, players bid in turn to find a good contract through a limited information channel. Based on a strong baseline agent that bids competitive bridge purely through domain-agnostic self-play, JPS improves collaboration of team players and outperforms WBridge5, a championship-winning software, by $+0.63$ IMPs (International Matching Points) per board over $1000$ games, substantially better than previous SoTA ($+0.41$ IMPs/b against WBridge5) under Double-Dummy evaluation. Note that $+0.1$ IMPs/b is regarded as a nontrivial improvement in Computer Bridge. Part of the code is released in \url{https://github.com/facebookresearch/jps}.
\end{abstract}

\section{Introduction}
\vspace{-0.1in}
Deep reinforcement learning has demonstrated strong or even super-human performance in many complex games (e.g., Atari~\cite{dqn-atari}, Dota 2~\cite{openai5}, Starcraft~\cite{vinyals2019grandmaster}, Poker~\cite{Brown418,deepstack}, Find and Seek~\cite{baker2019emergent}, Chess, Go and Shogi~\cite{silver2016alphago, silver2017alphagozero,tian2019elf}). While massive computational resources are used, the underlying approach is quite simple: to iteratively improve current agent policy, assuming stationary environment and fixed policies of all other agents. Although for two-player zero-sum games this is effective, for multi-agent collaborative with imperfect information, it often leads to sub-optimal Nash equilibria where none of the agents is willing to change their policies unilaterally. For example, if speaking one specific language becomes a convention, then unilaterally switching to a different one is not a good choice, even if the other agent actually knows that language better. 

In this case, it is necessary to learn to jointly change policies of multiple agents to achieve better equilibria. One brute-force approach is to change policies of multiple agents simultaneously, and re-evaluate them one by one on the entire game to seek for performance improvement, which is computationally expensive. Alternatively, one might hope that a change of a sparse subset of policies might lead to ``local'' changes of game values and evaluating these local changes can be faster. While this is intuitively reasonable, in imperfect information game (\ig), changing policy on one decision point leads to reachability changes of downstream states, leading to non-local interplay between policy updates.

In this paper, we realize this locality idea by proposing \emph{policy-change density}, a quantity defined at each perfect information history state with two key properties: \textbf{(1)} when summing over all states, it gives overall game value changes upon policy update, and \textbf{(2)} when the local policy remains the same, it vanishes regardless of any policy changes at other parts of the game tree. Based on this density, the value changes of any policy update on a sparse set of decision points can be decomposed into a summation on each decision point (or information set), which is easy and efficient to compute. 

Based on that, we propose a novel approach, called \emph{\oursfull} (\ours). For tabular \ig{}, \ours{} is proven to never worsen the current policy, and is computationally more efficient than brute-force approaches. For simple collaborative  games with enumerable states, we show that \ours{} improves policies returned by Counterfactual Regret Minimization baseline~\cite{zinkevich2008regret} by a fairly good margin, outperforming methods with explicit belief-modeling~\cite{BAD} and Advantageous Actor-Critic (A2C)~\cite{mnih2016asynchronous} with self-play, in particular in more complicated games.

Furthermore, we show \ours{} has a sample-based formulation and can be readily combined with gradient methods and neural networks. This enables us to apply \ours{} to Contract Bridge bidding, in which enumerating the information sets are computationally prohibitive\footnote{\small{In the bidding phase, asides the current player, each of the other 3 players can hold $6.35 \times 10^{11}$ unique hands and there are $10^{47}$ possible bidding sequences. Unlike hint games like Hanabi~\cite{hanabi}, public actions in Bridge (e.g. bid) do not have pre-defined meaning and does not decrease the uncertainty when game progresses.}}. Improved by \ours{} upon a strong A2C baseline, the resulting agent outperforms Wbridge5, a world computer bridge program that won multiple championships, by a large margin of $+0.63$ IMPs per board over a tournament of 1000 games, better than previous state-of-the-art \cite{Gong2019SimpleIB} that beats WBridge5 by $+0.41$ IMPs per board. Note that $+0.1$ IMPs per board is regarded as nontrivial improvement in computer bridge~\cite{baseline19}. 

\section{Related work}
\vspace{-0.1in}
\textbf{Methods to Solve Extensive-Form Games}. For two-player zero-sum extensive-form games, many algorithms have been proposed with theoretical guarantees. For perfect information game (\pg), $\alpha$-$\beta$ pruning, Iterative Deepening depth-first Search~\cite{korf1985depth}, Monte Carlo Tree Search~\cite{coulom2006efficient} are used in Chess~\cite{deepblue} and Go~\cite{silver2016alphago,tian2015better}, yielding strong performances. For imperfect information games (\ig{}), Double-Oracle~\cite{mcmahan2003planning}, Fictitious (self-)play~\cite{heinrich2016deep} and Counterfactual Regret Minimization (CFR~\cite{zinkevich2008regret,lanctot2009monte}) can be proven to achieve Nash equilibrium. These algorithms are \emph{coordinate-ascent}: iteratively find a best response to improve the current policy, given the opponent policies over the history.

On the other hand, it is NP-hard to obtain optimal policies for extensive-form collaborative \ig{} where two agents collaborate to achieve a best common pay-off~\cite{chu2001np}. Such games typically have multiple sub-optimal Nash equilibria, where unilateral policy update cannot help~\cite{foerster2016learning}. Many empirical approaches have been used. Self-play was used in large-scale \ig{} that requires collaboration like Dota 2~\cite{openai5} and Find and Seek~\cite{baker2019emergent}. Impressive empirical performance is achieved with huge computational efforts. Previous works also model belief space (e.g., Point-Based Value Iteration~\cite{Pineau_2006} in POMDP, BAD~\cite{BAD}) or model the behaviors of other agents (e.g., AWESOME~\cite{conitzer2007awesome}, Hyper Q-learning~\cite{tesauro2004extending}, LOLA~\cite{foerster2018learning}). To our best knowledge, we are the first to propose a framework for efficient computation of policy improvement of multi-agent collaborative \ig{}, and show that it can be extended to a sample-based form that is compatible with gradient-based methods and neural networks.

\textbf{Solving Imperfect Information Games}. While substantial progress has been made in \pg{}, how to effectively solve \ig{} in general remains open. Libratus~\cite{Brown418} and Pluribus~\cite{brown2019superhuman} outperform human experts in two-player and multi-player no-limit Texas Holdem with CFR and domain-specific state abstraction, and DeepStack~\cite{deepstack} shows expert-level performance with continual re-solving. ReBeL~\cite{brown2020combining} adapts AlphaZero style self-play to IIG, achieving superhuman level in Poker with much less domain knowledge. Recently,~\cite{lerer2019improving} shows strong performance in Hanabi using collaborative search with a pre-defined common blueprint policy. Suphx~\cite{li2020suphx} achieves superhuman level in Mahjong with supervised learning and policy gradient. DeepRole achieves superhuman level~\cite{serrino2019finding} in \emph{The Resistance: Avalon} with continual re-solving~\cite{deepstack}.

In comparison, Contract Bridge with team collaboration, competition and a huge space of hidden information, remains unsolved. While the playing phase has less uncertainty and champions of computer bridge tournament have demonstrated strong performances against top professionals (e.g., GIB~\cite{ginsberg1999gib}, Jack~\cite{jack}, Wbridge5~\cite{wbridge5}), bidding phase is still challenging due to much less public information. Existing software hard-codes human bidding rules. Recent works~\cite{baseline16, baseline19, Gong2019SimpleIB} use DRL to train a bidding agent, which we compare with. See Sec. 5 for details.

\def\start{\mathrm{start}}
\def\succ{\mathrm{succ}}
\def\down{\mathrm{Down}}
\def\activeSet{\mathrm{Active}}
\def\path{\mathrm{path}}
\def\cI{\mathcal{I}}
\def\noop{\bullet}
\def\cand{\mathrm{cand}}

\begin{figure*}
    \centering
    \includegraphics[width=\textwidth]{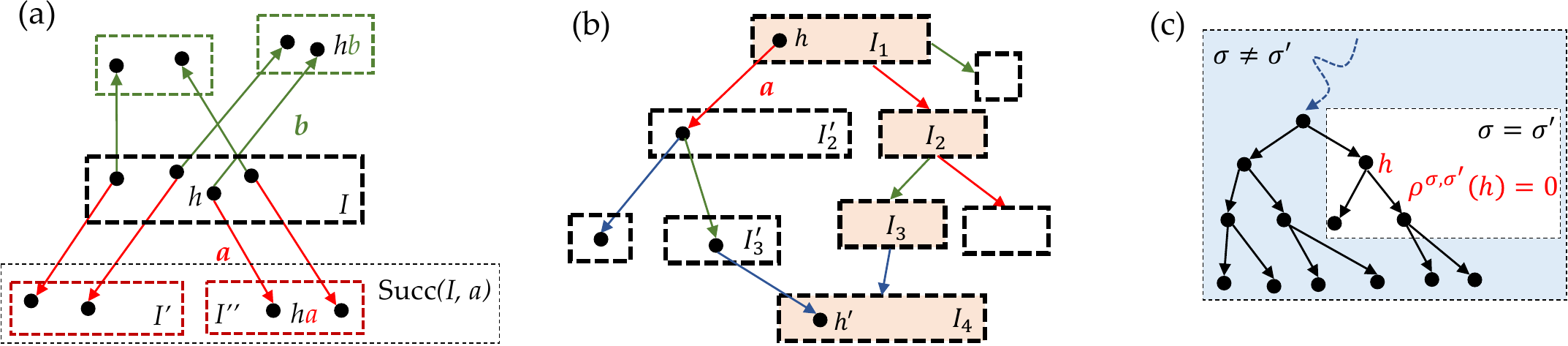}
    \caption{\textbf{(a)} Basic notations. \textbf{(b)} Hard case: a perfect information state $h'$ could first leave active infoset $I_1$, then re-enter the infoset (at $I_4$). Note that it could happen in perfect-recall games, given all the public actions are the same (shown in common red, green and blue edges) and $I_2$ and $I_4$ are played by different players. \textbf{(c)} Our formulation defines \emph{policy-change density} $\vrho^{\sigma,\sigma'}$ that vanishes in regions with $\sigma'=\sigma$, regardless of its upstream/downstream context where $\sigma'\neq\sigma$.}
    \label{fig:info-set}
    \vspace{-0.1in}
\end{figure*}

\section{Background and Notation}
In this section, we formulate our framework in the more general setting of general-sum games, where each of the $C$ players could have a different reward. In this paper, our technique is mainly applied to pure collaborative \ig{}s and we leave its applications in other types of games for future work.

Let $h$ be a perfect information state (or \textbf{state}) of the game. From game start, $h$ is reached via a sequence of public and private actions: $h = a_1a_2\ldots a_d$ (abbreviated as $a_{\le d}$). $I = \{h\}$ is an information set (or \textbf{infoset}) that contains all perfect states indistinguishable from the current player's point of view (e.g., in Poker, $I$ hold all possibilities of opponent cards given public cards and the player's private cards). All $h\in I$ share the same policy $\sigma(h) = \sigma(I)$ and $\sigma(I, a)$ is the probability of taking action $a$. $A(I)$ is the set of allowable actions for infoset $I$.

Let $I(h)$ be the infoset associated with state $h$. $ha$ is the unique next state after taking action $a$ from $h$. $h'$ is a \textbf{descendant} of $h$, denoted as $h\sqsubset h'$, if there exists a sequence of actions $\{a_1,a_2,\ldots, a_d\}$ so that $h' = ha_1a_2\ldots a_d = ha_{\le d}$. The \textbf{successor set} $\succ(I, a)$ contains all the next infosets after taking action $a$ from $I$. The size of $\succ(I, a)$ can be large (e.g., the opponent/partner can make many different decisions based on her private cards). The \textbf{active set} $\cI(\sigma', \sigma) := \{I: \sigma'(I) \neq \sigma(I)\}$ is the collection of infosets where the policy differs between $\sigma$ and $\sigma'$.

$\pi^\sigma(h) := \prod_{i=1}^{d-1} \sigma(I(a_{<i}), a_i)$ is the \textbf{reachability}: the probability of reaching state $h=a_1a_2\ldots a_d$ following the policy $\sigma$. Note that unlike CFR~\cite{zinkevich2008regret}, we use \emph{total} reachability: it includes the probability incurred by chance (or nature) actions and other player's actions under current policy $\sigma$. $Z$ is the \textbf{terminal set}. Each terminal state $z\in Z$ has a reward (or utility) $\vr(z) \in \rr^C$, where $C$ is the number of players. The $i$-th element of $\vr(z)$, $r_i(z)$, is the pay-off of the $i$-th player. 

For state $h\notin Z$, its \textbf{value function} $\vv^\sigma(h) \in \rr^C$ under the current policy $\sigma$ is:
\begin{equation}
    \vv^\sigma(h) = \sum_{a\in A(I(h))} \sigma(I(h), a)\vv^\sigma(ha)
\end{equation}
For terminal node $h\in Z$, its value $\vv^\sigma(z) = \vv(z) = \vr(z)$ is independent of $\sigma$. Intuitively, the value function is the expected reward starting from state $h$ following $\sigma$.  

For \ig{}, what we can observe is infoset $I$ but not state $h$. Therefore we could define \textbf{macroscopic} reachability $\pi^\sigma(I) = \sum_{h\in I}\pi^\sigma(h)$, value function $\vv^\sigma(I)$ and $Q$-function $\vq^\sigma(I, a)$:
\begin{equation}
    \vv^\sigma(I) = \sum_{h\in I} \pi^\sigma(h) \vv^\sigma(h), \quad\quad \vq^\sigma(I, a) = \sum_{h\in I} \pi^\sigma(h) \vv^\sigma(ha)
\end{equation}
and their conditional version: $\vV^\sigma(I) = \vv^\sigma(I)/\pi^\sigma(I)$ and $\vQ^\sigma(I, a) = \vq^\sigma(I, a)/\pi^\sigma(I)$. If we train DRL methods like DQN~\cite{dqn-atari} and A3C~\cite{mnih2016asynchronous} on \ig{} without a discount factor, $\vV^\sigma(I)$ and $\vQ^\sigma(I, a)$ are the terms actually learned in neural networks. As one key difference between \pg{} and \ig{}, $\vv^\sigma(h)$ only depends on the \emph{future} of $\sigma$ after $h$ but $\vV^\sigma(I)$ also depends on the \emph{past} of $\sigma$ before $h$ due to involved reachability: other players' policies affect the reachability of states $h$ within the current infoset $I$, which is invisible to the current player. 

Finally, we define $\bar\vv^\sigma \in \rr^C$ as the overall game value for all $C$ players. $\bar\vv^\sigma := \vv^\sigma(h_0)$ where $h_0$ is the game start (before any chance node, e.g., card dealing). 

\section{A Theoretical Framework for Evaluating Local Policy Change} 
We first start with a novel formulation to evaluate \emph{local} policy change. Local policy means that the active set $\cI(\sigma, \sigma') = \{I : \sigma(I) \neq \sigma'(I)\}$ is small compared to the total number of infosets. A naive approach would be to evaluate the new policy $\sigma'$ over the entire game, which is computationally expensive. 

One might wonder for each policy proposal $\sigma'$, is that possible to decompose $\bar \vv^{\sigma'} - \bar \vv^\sigma$ onto each individual infoset $I\in\cI(\sigma, \sigma')$. However, unlike \pg{}, due to interplay of upstream policies with downstream reachability, a local change of policy affects the utility of its downstream states. For example, a trajectory might leave an active infoset $I_1$ and and later re-enter another active infoset $I_4$ (Fig.~\ref{fig:info-set}(b)). In this case, the policy change at $I_1$ affects the evaluation on $I_4$. Such long-range interactions can be quite complicated to capture. 

This decomposition issue in \ig{} have been addressed in many previous works (e.g., CFR-D~\cite{burch2014solving,burch2018time}, DeepStack~\cite{deepstack}, Reach subgame solving~\cite{brown2017safe}), mainly in the context of solving subgames in a principled way in two-player zero-sum games (like Poker). In contrast, our framework allows simultaneous policy changes at different parts of the game tree, even if they could be far apart, and can work in general-sum games. To our best knowledge, no framework has achieved that so far. 

In this section, we coin a novel quantity called \emph{policy-change density} to achieve this goal.

\def\cO{\mathcal{O}}
\subsection{A Localized Formulation}
\label{sec:formulation}
We propose a novel formulation to \emph{localize} such interactions. For each state $h$, we first define the following \emph{cost} $\vc^{\sigma,\sigma'}\in \rr^C$ and \emph{policy-change density} $\vrho^{\sigma,\sigma'} \in \rr^C$:
\begin{equation}
    \vc^{\sigma,\sigma'}(h) = (\pi^{\sigma'}(h) - \pi^{\sigma}(h))\vv^\sigma(h), \quad\quad\quad 
    \vrho^{\sigma,\sigma'}(h) = -\vc^{\sigma,\sigma'}(h) + \sum_{a\in A(h)} \vc^{\sigma,\sigma'}(ha) \label{eq:c-rho-definition}
\end{equation}
Intuitively, $\vc^{\sigma,\sigma'}(h)$ means if we switch from $\sigma$ to $\sigma'$, what would be the difference in terms of expected reward, if the new policy $\sigma'$ remains the same for all $h$'s descendants. For policy-change density $\vrho^{\sigma,\sigma'}$, the intuition behind its name is clear with the following lemmas:
\begin{lemma}[Density Vanishes if no Local Policy Change]
\label{lemma:density}
For $h$, if $\sigma'(h) = \sigma(h)$, then $\vrho^{\sigma,\sigma'}(h) = \vzero$.
\end{lemma}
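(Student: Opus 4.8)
The plan is to prove the claim by a direct expansion of the two definitions in \eqref{eq:c-rho-definition}, relying on two structural facts. The first is the multiplicative (chain-rule) recursion for total reachability, $\pi^\sigma(ha) = \pi^\sigma(h)\,\sigma(I(h),a)$, which holds for any policy and follows immediately from the product form of $\pi^\sigma$. The second is the Bellman expansion $\vv^\sigma(h)=\sum_{a\in A(I(h))}\sigma(I(h),a)\,\vv^\sigma(ha)$ stated in the background section. Throughout I take $h$ to be non-terminal so that $A(h):=A(I(h))$ is the set over which the policy $\sigma(h)$ acts.

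First I would substitute the definition of $\vc^{\sigma,\sigma'}$ into that of $\vrho^{\sigma,\sigma'}$ to obtain
\[
\vrho^{\sigma,\sigma'}(h) = -(\pi^{\sigma'}(h)-\pi^\sigma(h))\,\vv^\sigma(h) + \sum_{a\in A(h)} (\pi^{\sigma'}(ha)-\pi^\sigma(ha))\,\vv^\sigma(ha).
\]
The goal is then to show that the summation reproduces the first term exactly, so that the two cancel.

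Next I would apply the reachability recursion to each child, writing $\pi^{\sigma'}(ha)-\pi^\sigma(ha) = \pi^{\sigma'}(h)\,\sigma'(I(h),a) - \pi^\sigma(h)\,\sigma(I(h),a)$. This is the single place where the hypothesis enters: since $\sigma'(h)=\sigma(h)$ means $\sigma'(I(h),a)=\sigma(I(h),a)$ for every $a\in A(h)$, the two branching probabilities coincide and the difference factors as $\sigma(I(h),a)\,(\pi^{\sigma'}(h)-\pi^\sigma(h))$. Substituting this back, the common scalar $(\pi^{\sigma'}(h)-\pi^\sigma(h))$ pulls out of the sum, leaving $\sum_{a\in A(h)}\sigma(I(h),a)\,\vv^\sigma(ha)$, which is precisely $\vv^\sigma(h)$ by the Bellman expansion. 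Hence the summation equals $(\pi^{\sigma'}(h)-\pi^\sigma(h))\,\vv^\sigma(h)$ and cancels the first term, yielding $\vrho^{\sigma,\sigma'}(h)=\vzero$.

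There is no deep obstacle here; the content is a one-step telescoping cancellation, and the main point requiring care is the factorization of $\pi^{\sigma'}(ha)-\pi^\sigma(ha)$. Conceptually this is exactly where the ``locality'' claim lives: although $\vrho^{\sigma,\sigma'}(h)$ appears to depend on the full upstream/downstream context through the reachabilities $\pi^{\sigma'}(h)$ and $\pi^\sigma(h)$, the hypothesis $\sigma'(h)=\sigma(h)$ forces the local transition probabilities to agree, so that the upstream discrepancy enters only as a scalar multiple and is absorbed by the value recursion. I would finally remark that the argument is indifferent to whether $h$ is owned by the current player, a partner, an opponent, or chance, since both the reachability recursion and the value recursion hold uniformly; this is what makes the density vanish ``regardless of context,'' as claimed in the surrounding discussion.
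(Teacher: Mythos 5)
Your proof is correct and takes essentially the same route as the paper's: both expand $\vrho^{\sigma,\sigma'}(h)$ via the definition of $\vc^{\sigma,\sigma'}$, use the reachability chain rule $\pi^{\sigma'}(ha)=\pi^{\sigma'}(h)\,\sigma'(I(h),a)$ together with the hypothesis $\sigma'(I(h),\cdot)=\sigma(I(h),\cdot)$ to factor out $(\pi^{\sigma'}(h)-\pi^\sigma(h))$, and then invoke the Bellman expansion $\vv^\sigma(h)=\sum_{a}\sigma(I(h),a)\,\vv^\sigma(ha)$ to obtain the cancellation. Your closing remarks on why the argument is indifferent to who owns $h$ are accurate but not needed for the result.
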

\begin{lemma}[Density Summation]
\label{lemma:traj-decomposition}
$\bar \vv^{\sigma'} - \bar \vv^{\sigma} = \sum_{h \notin Z} \vrho^{\sigma,\sigma'}(h)$.
\end{lemma}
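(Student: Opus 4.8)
The plan is to compute the right-hand side $\sum_{h\notin Z}\vrho^{\sigma,\sigma'}(h)$ directly and show that it telescopes across the game tree down to a sum over terminal states, which is then matched to $\bar\vv^{\sigma'}-\bar\vv^\sigma$. Writing $\Delta\pi(h):=\pi^{\sigma'}(h)-\pi^\sigma(h)$ so that $\vc^{\sigma,\sigma'}(h)=\Delta\pi(h)\,\vv^\sigma(h)$, I would first substitute the definition of $\vrho^{\sigma,\sigma'}$ and split the two pieces:
$$\sum_{h\notin Z}\vrho^{\sigma,\sigma'}(h) = -\sum_{h\notin Z}\vc^{\sigma,\sigma'}(h) + \sum_{h\notin Z}\sum_{a\in A(h)}\vc^{\sigma,\sigma'}(ha).$$

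The key structural step is to reindex the double sum. In an extensive-form game tree every non-root state $h'$ has a unique parent, and that parent is necessarily non-terminal; conversely every child $ha$ of a non-terminal $h$ is a non-root state. Hence the map $(h,a)\mapsto ha$ with $h\notin Z$ is a bijection onto $\{h':h'\neq h_0\}$, so the double sum equals $\sum_{h'\neq h_0}\vc^{\sigma,\sigma'}(h')$. Splitting this into terminal and non-terminal contributions, the non-terminal non-root terms cancel exactly against the corresponding terms of $-\sum_{h\notin Z}\vc^{\sigma,\sigma'}(h)$, leaving only the root term $-\vc^{\sigma,\sigma'}(h_0)$ and the terminal terms $\sum_{z\in Z}\vc^{\sigma,\sigma'}(z)$. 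Since $h_0$ is reached with probability one under any policy, $\pi^\sigma(h_0)=\pi^{\sigma'}(h_0)=1$, so $\vc^{\sigma,\sigma'}(h_0)=\vzero$ and the root term drops out. Using $\vv^\sigma(z)=\vr(z)$ at terminals, the whole expression collapses to $\sum_{z\in Z}\Delta\pi(z)\,\vr(z)$.

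It then remains to identify $\sum_{z\in Z}\pi^\sigma(z)\,\vr(z)$ with $\bar\vv^\sigma=\vv^\sigma(h_0)$. I would prove the more general claim $\vv^\sigma(h)=\sum_{z\in Z,\, h\sqsubseteq z} P^\sigma(h\to z)\,\vr(z)$ by backward induction on the tree, where $P^\sigma(h\to z)$ denotes the product of action probabilities along the path from $h$ to $z$: the base case $h\in Z$ is immediate, and for non-terminal $h$ the value recursion $\vv^\sigma(h)=\sum_{a}\sigma(I(h),a)\vv^\sigma(ha)$ combined with $P^\sigma(h\to z)=\sigma(I(h),a)\,P^\sigma(ha\to z)$ propagates the identity to $h$ from its children. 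Since $\pi^\sigma(h_0)=1$ gives $P^\sigma(h_0\to z)=\pi^\sigma(z)$, evaluating at $h_0$ yields $\bar\vv^\sigma=\sum_{z\in Z}\pi^\sigma(z)\,\vr(z)$; applying this to both $\sigma$ and $\sigma'$ converts $\sum_{z\in Z}\Delta\pi(z)\,\vr(z)$ into $\bar\vv^{\sigma'}-\bar\vv^\sigma$, as desired.

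The main obstacle I anticipate is the bookkeeping of the reindexing step rather than any deep idea: one must verify carefully that $(h,a)\mapsto ha$ over non-terminal $h$ enumerates each non-root state exactly once, so that terminal states are correctly counted as children (they contribute the surviving reward terms), the interior cancellation is complete, and no node is double-counted. The vanishing of the root cost and the value--reachability identity are then routine, and finiteness of the tree guarantees all sums are well defined.
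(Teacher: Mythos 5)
Your proof is correct, but it takes a genuinely different route from the paper's. The paper obtains the lemma as the special case $h_0 = {}$game start of a subtree identity (Lemma~\ref{lemma:general-2}): $\pi^{\sigma'}(\vv^{\sigma'}-\vv^{\sigma})|_{h_0} = \sum_{h_0\sqsubseteq h\notin Z}\vrho^{\sigma,\sigma'}(h)$, proved by expanding $\vv^{\sigma'}(h_0)=\sum_{z}\pi^{\sigma'}(z|h_0)\vv(z)$, telescoping terms of the form $\pi^{\sigma'}(h,z|h_0)\vv^{\sigma}(h)$ along each path from $h_0$ to a terminal $z$, and then marginalizing over terminals via $\sum_{z:\,h\sqsubseteq z}\pi^{\sigma'}(h,z|h_0)=\pi^{\sigma'}(h|h_0)$; that manipulation works with the compact form of the density (Eqn.~\ref{eq:rho-computation}), which involves only the new reachability $\pi^{\sigma'}$. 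You instead stay with the defining form $\vrho^{\sigma,\sigma'}(h)=-\vc^{\sigma,\sigma'}(h)+\sum_{a}\vc^{\sigma,\sigma'}(ha)$ and perform a single global summation by parts over the tree: the parent--child bijection $(h,a)\mapsto ha$ for $h\notin Z$ cancels every interior cost term and leaves only the boundary, namely $-\vc^{\sigma,\sigma'}(h_0)$ (which vanishes since $\pi^{\sigma}(h_0)=\pi^{\sigma'}(h_0)=1$) plus $\sum_{z\in Z}\vc^{\sigma,\sigma'}(z)$, and the standard identity $\bar\vv^{\sigma}=\sum_{z\in Z}\pi^{\sigma}(z)\vr(z)$ (your backward induction) turns the latter into $\bar\vv^{\sigma'}-\bar\vv^{\sigma}$. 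Your bookkeeping is lighter --- no conditional or joint reachabilities are needed --- and it makes the ``discrete divergence theorem'' character of the result transparent; the paper's route buys the more general weighted subtree statement $\pi^{\sigma'}(h_0)\bigl(\vv^{\sigma'}(h_0)-\vv^{\sigma}(h_0)\bigr)$ at an arbitrary internal node, which is what underlies its remark that density sums over any subtree reduce to a function of the boundary. Note that your argument recovers this generality at essentially no cost: restricting the cancellation to the subtree rooted at $h_0$ and keeping the now non-vanishing root term gives $\sum_{h_0\sqsubseteq h\notin Z}\vrho^{\sigma,\sigma'}(h) = -\vc^{\sigma,\sigma'}(h_0)+\sum_{z\in Z,\ h_0\sqsubseteq z}\vc^{\sigma,\sigma'}(z) = \pi^{\sigma'}(h_0)\bigl(\vv^{\sigma'}(h_0)-\vv^{\sigma}(h_0)\bigr)$, which is exactly Lemma~\ref{lemma:general-2}.
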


Intuitively, Lemma~\ref{lemma:density} shows that $\vrho^{\sigma,\sigma'}$ vanishes if policy does not change \emph{within} a state, regardless of whether policy changes in other part of the game. Lemma~\ref{lemma:traj-decomposition} shows that the summation of density in a subtree can be represented as a function evaluated at its boundary. As a result, \textbf{$\vrho^{\sigma,\sigma'}$ is a \emph{local} quantity with respect to policy change}. In comparison, quantities like $\pi^\sigma$, $v^\sigma$, $c$ and $\pi^{\sigma'}\vv^{\sigma'} - \pi^{\sigma}\vv^{\sigma}$ are \emph{non-local}: e.g., $\vv^\sigma(h)$ (or $\pi^\sigma(h)$) changes if the downstream $\vv^\sigma(h')$ (or upstream $\vv^\sigma(h')$) changes due to $\sigma\rightarrow\sigma'$, even if the local policy remains the same (i.e., $\sigma(h) = \sigma'(h)$). 

With this property, we now address how to decompose $\bar \vv^{\sigma'} - \bar \vv^\sigma$ onto active set $\cI$. According to Lemma~\ref{lemma:density}, for any infoset $I$ with $\sigma'(I)=\sigma(I)$, the policy-change density vanishes. Therefore:
\begin{theorem}[InfoSet Decomposition of Policy Change]
\label{thm:infoset-decomposition}
When $\sigma\rightarrow\sigma'$, the change of game value is:
\begin{equation}
      \bar \vv^{\sigma'} - \bar \vv^{\sigma} = \sum_{I\in \cI} \sum_{h\in I} \vrho^{\sigma,\sigma'}(h) \label{eq:theorem2}
\end{equation}
\end{theorem}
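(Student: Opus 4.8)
The plan is to obtain the theorem as a direct refinement of Lemma~\ref{lemma:traj-decomposition}, using Lemma~\ref{lemma:density} to discard the inactive infosets. First I would take as the starting point the density summation
\begin{equation*}
\bar \vv^{\sigma'} - \bar \vv^{\sigma} = \sum_{h\notin Z} \vrho^{\sigma,\sigma'}(h),
\end{equation*}
which already expresses the global value change as a sum of the purely local contributions $\vrho^{\sigma,\sigma'}(h)$.

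Next I would reorganize the right-hand side by grouping the non-terminal states according to the infoset to which they belong. Since the information sets form a partition of the decision states, every $h\notin Z$ lies in exactly one infoset $I(h)$, so I can rewrite $\sum_{h\notin Z}$ as $\sum_{I}\sum_{h\in I}$ with no double counting or omission, and then split the outer sum into the active infosets $I\in\cI$ and the inactive ones $I\notin\cI$:
\begin{equation*}
\sum_{h\notin Z}\vrho^{\sigma,\sigma'}(h) = \sum_{I\in\cI}\sum_{h\in I}\vrho^{\sigma,\sigma'}(h) + \sum_{I\notin\cI}\sum_{h\in I}\vrho^{\sigma,\sigma'}(h).
\end{equation*}

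The crux is then to show the second sum vanishes. By definition of the active set, $I\notin\cI$ means $\sigma(I)=\sigma'(I)$; and since all states in an infoset share the same policy ($\sigma(h)=\sigma(I)$), we get $\sigma(h)=\sigma(I)=\sigma'(I)=\sigma'(h)$ for every $h\in I$. Lemma~\ref{lemma:density} then applies state-by-state to give $\vrho^{\sigma,\sigma'}(h)=\vzero$ for all such $h$, so the entire inactive block contributes $\vzero$. Substituting back leaves only the active infosets, which is exactly \eqref{eq:theorem2}.

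I expect no serious obstacle: the statement is essentially a corollary of the two lemmas, and the only point requiring care is the bookkeeping of the reindexing, namely confirming that the infoset partition of the non-terminal states is both exhaustive and disjoint so that regrouping the sum is legitimate, together with the (already stated) fact that the policy is constant across each infoset, which is what lets me invoke Lemma~\ref{lemma:density} uniformly over all $h\in I$. The genuine mathematical content lives in the two lemmas, in particular the locality property of $\vrho^{\sigma,\sigma'}$ recorded in Lemma~\ref{lemma:density}, both of which I am allowed to assume here.
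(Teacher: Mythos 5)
Your proposal is correct and matches the paper's own proof essentially line for line: both apply Lemma~\ref{lemma:traj-decomposition} to write $\bar\vv^{\sigma'}-\bar\vv^{\sigma}$ as a sum over non-terminal states, regroup that sum by infoset using the partition property, and then invoke Lemma~\ref{lemma:density} to annihilate the contribution of every inactive infoset $I\notin\cI$. Your added care about exhaustiveness/disjointness of the infoset partition and the constancy of $\sigma$ across each infoset is implicit in the paper's one-line regrouping, so there is nothing further to reconcile.
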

Theorem~\ref{thm:infoset-decomposition} is our main theorem that decomposes local policy changes to each infoset in the active set $\cI$. In the following, we will see how our Joint Policy Search utilizes this property to find a better policy $\sigma'$ from the existing one $\sigma$.

\subsection{Comparison with regret in CFR}
From Eqn.~\ref{eq:c-rho-definition}, we could rewrite the density $\vrho^{\sigma,\sigma'}(h)$ in a more concise form after algebraic manipulation:
\begin{equation}
    \vrho^{\sigma,\sigma'}(h) = \pi^{\sigma'}(h)\left[\sum_{a \in A(I)}\sigma'(I, a)\vv^\sigma(ha) - \vv^\sigma(h)\right]\label{eq:rho-computation}
\end{equation}
Note that this is similar to the regret term in vanilla CFR~\cite{zinkevich2008regret} (Eqn. 7), which takes the form of $\pi^\sigma_{-i}(h)(\vv^\sigma(ha) - \vv^\sigma(h))$ for player $i$ who is to play action $a$ at infoset $I(h)$. In addition that CFR regret uses pure policy $\sigma'$, the key difference here is that we use the total reachability $\pi^{\sigma'}(h)$ evaluated on the \emph{new} policy $\sigma'$, while CFR uses the except-player-$i$ reachability $\pi^{\sigma}_{-i}(h)$ evaluated on the \emph{old} policy $\sigma$.

We emphasize that this small change leads to very different (and novel) theoretical insights. It leads to our policy-change decomposition (Theorem~\ref{thm:infoset-decomposition}) that \emph{exactly} captures the value difference before and after policy changes for general-sum games, while in CFR, summation of the regret at each infoset $I$ is \emph{an upper bound} of the Nash exploitability for two-player zero-sum games. Our advantage comes with a price: the regret in CFR only depends on the old policy $\sigma$ and can be computed independently at each infoset, while computing our policy-change density requires a re-computation of the altered reachability due to new policy $\sigma'$ on the upstream infosets, which will be addressed in Sec.~\ref{sec:jps}. From the derivation, we could also see that the assumption of \emph{perfect recall} is needed to ensure that no double counting exists so that the upper bound can hold (Eqn. 15 in~\cite{zinkevich2008regret}), while there is no such requirement in our formula. We will add these comparisons in the next version.  

\section{Joint Policy Search in Pure Collaborative Multi-agent \ig{}s}
In this paper, we focus on pure collaborative games, in which all players share the same value. Therefore, we could replace $\vv^\sigma$ with a scalar $v^\sigma$. We leave more general cases as the future work. 

For pure collaborative settings, we propose \ours, a novel approach to jointly optimize policies of multiple agents at the same time in \ig{}. Our goal is to find a policy improvement $\sigma'$ so that it is guaranteed that the changes of \emph{overall game value} $\bar v^{\sigma'} - \bar v^\sigma$ is always non-negative. 

\label{sec:tabular-algorithm}

\subsection{Joint Policy Search (\ours)}
\label{sec:jps}
Using Theorem~\ref{thm:infoset-decomposition}, we now can evaluate $\bar v^{\sigma'} - \bar v^{\sigma}$ efficiently. The next step is to have many policy proposals and pick the best one. To perform joint policy search, we pick an active set $\cI$, construct combinations of policy changes at $I\in\cI$ and pick the best policy improvement. To compute policy-change density $\rho^{\sigma,\sigma'}$, before the search starts, we first sweep all the states $h$ to get $v^\sigma(h)$ and $\pi^\sigma(h)$, which can be shared across different policy proposals. During search, the only term we need to compute is the altered reachability $\pi^{\sigma'}$, which depends on upstream policy changes. Therefore, we rank $\cI$ from upstream to downstream and perform a depth-first search (Alg.~\ref{alg:tabular}). The search has the complexity of $\cO(|S| + M)$, where $|S|$ is the total number of states and $M$ is the number of policy candidates. This is more efficient than naive brute-force search that requires a complete sweep of all states for each policy candidate ($\cO(|S|M)$).

\begin{algorithm}
\caption{Joint Policy Search (Tabular form)}
\label{alg:tabular}
\begin{algorithmic}[1]
    \Function{JSP-Main}{$\sigma$}
        \For{$i = 1 \ldots T$}
        \State Compute reachability $\pi^\sigma$ and value $v^\sigma$ under $\sigma$. Pick initial infoset $I_1$.
        \State $\sigma \leftarrow \mathrm{JPS}(\sigma, \{I_1\}, 1)$.
        \EndFor
    \EndFunction
    \Function{JPS}{$\sigma,\cI_\cand$, $d$} \hfill \Comment{\textit{$\cI_\cand$: candidate infosets}}
    \If{$d \ge D$} 
    \State \textbf{return} 0.  \hfill \Comment{\textit{Search reaches maximal depth $D$}}
    \EndIf 
    \For{$I\in \cI_\cand$ and $h\in I$} \Comment{\textit{Set altered reachability with $\sigma'$}}
    \State Compute $\pi^{\sigma'}(h)$ by back-tracing $h' \sqsubset h$ until $I(h')$ is active. Otherwise $\pi^{\sigma'}(h) = \pi^{\sigma}(h)$.
    \EndFor
    \For{$I \in \cI_\cand$ and $a \in A(I)$}  \Comment{\textit{Depth-first Search}}
    \State Set $I$ active. Set $\sigma'(I)$ and accordingly Eqn.~\ref{eq:binary-policy}.
    \State Compute $J^{\sigma, \sigma'}(I) = \sum_{h\in I}\rho^{\sigma, \sigma'}(h)$ using Eqn.~\ref{eq:rho-computation}.
    \State Set $r(I, a) = \mathrm{JPS}(\sigma, \succ(I, a), d+1) + J^{\sigma, \sigma'}(I)$ \Comment{\textit{Recursive Call JPS function}}
    \EndFor
    \State \textbf{return} $\max(0, \max_{I,a} r(I,a))$ \hfill\Comment{Also consider if no infoset in $\cI_\cand$ is active.} 
    \EndFunction
\end{algorithmic}
\end{algorithm}

\textbf{Choice of active set $\cI$ and $\sigma'$}. In our experiment, we choose $\cI = [I_1,\ldots,I_D]$ so that $I_{i+1} \in \succ(I_i, a_i)$ with some $a_i$. On the active set $\cI$, any $\sigma'$ works. In tabular case, we use one-hot policy:
\begin{equation}
    \sigma'(I_i, a) = \mathbb{I}[a = a_i] \label{eq:binary-policy}
\end{equation}
In Alg.~\ref{alg:tabular}, we search over different $a_i$ that determines $\sigma'$ as well as different infosets in $\succ(I_i, a_i)$ to achieve the best performance. Dominated by pure strategies, mixed strategies are not considered.
\begin{theorem}[Performance Guarantee for Alg.~\ref{alg:tabular}]
\label{thm:performance-alg}
    $\bar v^{\sigma'} \ge \bar v^{\sigma}$ for $\sigma' = \mathrm{JSP}\mbox{-}\mathrm{Main}(\sigma)$.  
\end{theorem}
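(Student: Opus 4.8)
The plan is to prove the guarantee in two layers: first, that a single inner call $\mathrm{JPS}(\sigma, \{I_1\}, 1)$ produces a policy $\sigma'$ with $\bar v^{\sigma'} \ge \bar v^{\sigma}$; and second, that the outer loop of $\mathrm{JSP}\text{-}\mathrm{Main}$ preserves this monotonicity across all $T$ iterations by a routine induction, so the final output dominates the original input. Because the claim is only non-worsening (not optimality), I do not need the search to find the \emph{best} proposal --- only that whatever proposal is adopted has a correctly computed, non-negative value change.

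For the single-call step I would first observe that every proposal the depth-first search considers differs from $\sigma$ only on a chain of infosets $\cI = \{I_1,\ldots,I_d\}$ with $I_{j+1}\in\succ(I_j,a_j)$, following the one-hot rule of Eqn.~\ref{eq:binary-policy}, since at each depth the loop selects a single $(I,a)$ pair and recurses into $\succ(I,a)$. Hence the active set of such a proposal is exactly this chain (an $I_j$ whose one-hot choice happens to coincide with $\sigma$ is inactive, but contributes nothing anyway by Lemma~\ref{lemma:density}). Applying Theorem~\ref{thm:infoset-decomposition} then gives $\bar v^{\sigma'}-\bar v^{\sigma} = \sum_{j} J^{\sigma,\sigma'}(I_j)$ with $J^{\sigma,\sigma'}(I)=\sum_{h\in I}\rho^{\sigma,\sigma'}(h)$. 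The core of this step is an induction on the search depth showing that the returned quantity $r(I,a) = J^{\sigma,\sigma'}(I) + \mathrm{JPS}(\sigma,\succ(I,a),d+1)$ equals precisely this telescoped chain sum: the recursion deposits one term $J^{\sigma,\sigma'}(I_j)$ per level, and the $\max(0,\cdot)$ / ``no infoset active'' branch lets the chain terminate at any depth.

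The crucial correctness point --- and the main obstacle --- is that each $J^{\sigma,\sigma'}(I_j)$ evaluated during the DFS equals the term that Theorem~\ref{thm:infoset-decomposition} assigns to $I_j$ under the \emph{full} proposal $\sigma'$. By the concise form Eqn.~\ref{eq:rho-computation}, $\rho^{\sigma,\sigma'}(h)$ depends on the new policy only through the altered reachability $\pi^{\sigma'}(h)$, as the bracketed term uses the precomputed old values $v^\sigma$ (shared across all proposals). I must therefore verify that the algorithm's back-tracing rule --- walk upstream from $h$ to the first active infoset, otherwise set $\pi^{\sigma'}(h)=\pi^{\sigma}(h)$ --- reproduces the true $\pi^{\sigma'}(h)$ for the chain proposal. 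Since the search processes $\cI$ from upstream to downstream, by the time $I_j$ is reached all upstream one-hot choices $a_1,\ldots,a_{j-1}$ are fixed along the current DFS path, so the reachability of any $h\in I_j$ is the old reachability rescaled by those upstream switches --- exactly what back-tracing to the nearest active ancestor computes. Establishing this agreement rigorously, including that only the nearest active ancestor affects $\pi^{\sigma'}(h)$, is the technical heart of the argument.

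Finally I would close with non-negativity and induction. The trivial proposal $\sigma'=\sigma$ has empty active set, so by Lemma~\ref{lemma:density} its computed value change is $0$; the explicit $\max(0,\cdot)$ guarantees the adopted proposal's value is at least this, hence $\bar v^{\sigma'}-\bar v^{\sigma}\ge 0$ given the correctness just established. Thus each inner call yields $\bar v^{\sigma'}\ge\bar v^{\sigma}$, and inducting over the $T$ iterations of $\mathrm{JSP}\text{-}\mathrm{Main}$ --- recomputing $\pi^\sigma$ and $v^\sigma$ afresh at the start of each iteration so the decomposition applies to the \emph{current} policy --- gives $\bar v^{\sigma'}\ge\bar v^{\sigma}$ for the returned $\sigma'$.
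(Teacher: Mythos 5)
Your proposal is correct and follows essentially the same route as the paper's own (very terse) proof: invoke Theorem~\ref{thm:infoset-decomposition} to assert that Alg.~\ref{alg:tabular} computes $\bar v^{\sigma'}-\bar v^{\sigma}$ exactly for each chain proposal, and conclude non-negativity because the no-change option (the $\max(0,\cdot)$, equivalently the candidate $\sigma'=\sigma$ with empty active set) is always available. You go further than the paper in explicitly verifying what it merely asserts --- that the DFS return value telescopes the per-infoset sums $J^{\sigma,\sigma'}(I_j)$ and that back-tracing to the nearest active ancestor reproduces the altered reachability $\pi^{\sigma'}(h)$ --- which is a faithful filling-in of the same argument rather than a different one.
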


\subsection{Online Joint Policy Search (OJPS)}
\label{sec:online-jps}
To compute quantities in Theorem~\ref{thm:infoset-decomposition}, we still need to compute $\pi^\sigma$ and $v^\sigma$ on all states. This makes it hard for real-world scenarios (e.g., Contract Bridge), where an enumeration of all states is computationally infeasible. Therefore, we consider an online sampling version. Define $J^{\sigma, \sigma'}(I) = \sum_{h\in I}\rho^{\sigma, \sigma'}(h)$ and $J$ can be decomposed into two terms $J(I) = J_1(I) + J_2(I)$ ($\lambda$ is a constant):
\begin{eqnarray}
    J_1(I) &=& \sum_{h\in I}(\pi^{\sigma'}(h) - \lambda\pi^\sigma(h)) \left(\sum_{a\in A(I)}\sigma'(I, a) v^\sigma(ha) - v^\sigma(h)\right) \label{eq:J1} \\
    J_2(I) &=& \lambda\sum_{h\in I} \pi^\sigma(h) \left(\sum_{a\in A(I)} \sigma'(I, a)Q^\sigma(I, a) - V^\sigma(I)\right) \label{eq:J2}
\end{eqnarray}
If we sample a trajectory by running the current policy $\sigma$ and pick one perfect information state $h_0$, then $h_0\sim \pi^\sigma(\cdot)$. Then, for $I=I(h)$, using this sample $h_0$, we can compute $\hat J_1(I) = (\pi^{\sigma'}(h|h_0) - \lambda\pi^\sigma(h|h_0)) \left(\sum_{a} \sigma'(I, a) v^\sigma(ha) - v^\sigma(h)\right)$ and $\hat J_2(I) = \lambda\pi^\sigma(h|h_0)\left(\sum_{a}\sigma'(I, a)Q^\sigma(I, a) - V^\sigma(I)\right)$ can be computed via macroscopic quantities (eg., from neural network). Here $\pi^\sigma(h|h_0) := \pi^\sigma(h) / \pi^\sigma(h_0)$ is the (conditional) probability of reaching $h$ starting from $h_0$. Intuitively, $\hat J_1$ accounts for the benefits of taking actions that favors the current state $h$ (e.g., what is the best policy if all cards are public?), and $\hat J_2$ accounts for effects due to other perfect information states that are not yet sampled. The hyper-parameter $\lambda$ controls their relative importance. Therefore, it is possible that we could use a few perfect information states $h$ to improve imperfect information policy via searching over the best sequence of joint policy change. The resulting action sequence representing joint policy change is sent to the replay buffer for neural network training.

\yuandong{The experiment shows that we should not use one-hot policy, but a $\Delta$ altered policy is more reasonable. We should update accordingly.}

\def\nn{\mathbb{N}}
\def\cA{\mathcal{A}}
\def\cS{\mathcal{S}}

\section{Experiments on Simple Collaborative Games}
\label{sec:tabular-exp}
We try \ours{} on multiple simple two-player pure collaborative \ig{} to demonstrate its effectiveness. Except for private card dealing, all actions in these games are public knowledge with perfect recall. Note that \ours{} can be regarded as a booster to improve any solutions from any existing approaches. 

\begin{definition}[Simple Communication Game of length $L$]
\label{def:comm-game}
Consider a game where $s_1\in \{0, \ldots, 2^{L}-1\}$, $a_1 \in \cA_1 = \{0,1\}$, $a_2 \in \cA_2 \in \{0, \ldots, 2^{L}-1\}$. $P1$ sends one binary public signal for $L$ times, then P2 guess P1's private $s_1$. The reward $r = \mathbf{1}[s_1 = a_2]$ (i.e. $1$ if guess right). 
\end{definition}

\begin{definition}[Simple Bidding Game of size $N$]
\label{def:simple-bidding}
P1 and P2 each dealt a private number $s_1,s_2 \sim \mathrm{Uniform}[0,\ldots, N-1]$. $\cA = \{\mathrm{Pass}, 2^0, \ldots, 2^k\}$ is an ordered set. The game alternates between P1 and P2, and P1 bids first. The bidding sequence is strictly increasing. The game ends if either player passes, and $r = 2^k$ if $s_1 + s_2 \ge 2^k$ where $k$ is the latest bid. Otherwise the contract fails and $r = 0$.
\end{definition}

\begin{definition}[2-Suit Mini-Bridge of size $N$]
\label{def:mini-bridge}
P1 and P2 each dealt a private number $s_1,s_2 \sim \mathrm{Uniform}[0,1,\ldots, N]$. $\cA = \{\mathrm{Pass}, 1\heartsuit, 1\spadesuit, 2\heartsuit, ... N\heartsuit, N\spadesuit\}$ is an ordered set. The game progresses as in Def.~\ref{def:simple-bidding}. Except for the first round, the game ends if either player passes. If $k\spadesuit$ is the last bid and $s_1+s_2 \ge N+k$, or if $k\heartsuit$ is the last bid and $s_1+s_2 \le N-k$, then $r = 2^{k-1}$, otherwise the contract fails ($r = -1$). For pass out situation $(\mathrm{Pass}, \mathrm{Pass})$, $r = 0$.
\end{definition}

\begin{table}[t]
    \centering
    \caption{\small{Average reward of multiple tabular games after optimizing policies using various approaches. Both CFR~\cite{zinkevich2008regret} and CFR1k+\ours{} repeats with 1k different seeds. BAD~\cite{BAD} runs 50 times. The trunk policy network of BAD uses 2 Fully Connected layers with 80 hidden units. Actor-Critic run 10 times. The super script $*$ means the method obtains the best known solution in \emph{one} of its trials. We omit all standard deviations of the mean values since they are $\sim 10^{-2}$.}}
    \label{tbl:tabular-result}
    \small
    \setlength{\tabcolsep}{2pt}
    \vspace{0.1in}
    \begin{tabular}{|c||c|c|c|c||c||c|c|c||c|c|c|}
    \hline
        &  \multicolumn{4}{c}{Comm (Def.~\ref{def:comm-game})} & {Mini-Hanabi} & \multicolumn{3}{c}{Simple Bidding (Def.~\ref{def:simple-bidding})} & \multicolumn{3}{c|}{2SuitBridge (Def.~\ref{def:mini-bridge})} \\ 
        &  $L=3$ &$L=5$ & $L=6$ & $L=7$ & ~\cite{BAD} & $N=4$ & $N=8$ & $N=16$ & $N=3$ & $N=4$ & $N=5$ \\
        \hline
        CFR1k~\cite{zinkevich2008regret} & $0.89^*$ &  $0.85$ & $0.85$& $0.85$ & $9.11^*$ & $2.18^*$ & $4.96^*$ & $10.47$  & $1.01^*$ & $1.62^*$ & $2.60$ \\ 
        CFR1k+\ours & $\mathbf{1.00^*}$ &$\mathbf{1.00^*}$ & $\mathbf{1.00^*}$ & $\mathbf{1.00^*}$ & $\mathbf{9.50^*}$ & $2.20^*$ & $\mathbf{5.00^*}$ & $\mathbf{10.56^*}$ & $\mathbf{1.07^*}$ & $\mathbf{1.71^*}$  &  $\mathbf{2.74^*}$                     \\
        A2C~\cite{mnih2016asynchronous} & $0.60^*$ & $0.57$& $0.51$ &  $0.02$ & $8.20^*$ & $2.19$ & $4.79$ & $9.97$  & $0.66$ & $1.03$ & $1.71$                  \\
        BAD~\cite{BAD} & $\mathbf{1.00^*}$ & $0.88$ & $0.50$ & $0.29$  & $9.47^*$ &  $\mathbf{2.23^*}$ & $4.99^*$ & $9.81$  & $0.53$ & $0.98$ & $ 1.31$ \\
        \hline\hline
        \textbf{Best Known} & 1.00 & 1.00 & 1.00 & 1.00& 10 & 2.25 & 5.06 & 10.75 & 1.13 & 1.84 & 2.89 \\
        \#States   & 633 & 34785 &270273 &2129793 & 53 & 241 & 1985&  16129 &4081& 25576& 147421\\
        \#Infosets & 129 &  2049 & 8193 & 32769& 45 & 61 & 249 & 1009 &1021 &5116 & 24571 \\ \hline
    \end{tabular}
    \label{tab:comm_results}
\end{table}

The communication game (Def.~\ref{def:comm-game}) can be perfectly solved to reach a joint reward of $1$ with arbitrary binary encoding of $s_1$. However, there exists many local solutions where P1 and P2 agree on a subset of $s_1$ but have no consensus on the meaning of a new $L$-bit signal. In this case, a unilateral approach cannot establish such a consensus. The other two games are harder. In Simple Bidding (Def.~\ref{def:simple-bidding}), available actions are on the order of $\log(N)$, requiring P1 and P2 to efficiently communicate. The Mini-Bridge (Def.~\ref{def:mini-bridge}) mimics the bidding phase of Contract Bridge: since bids can only increase, both players need to strike a balance between reaching highest possible contract (for highest rewards) and avoiding overbids that lead to negative rewards. In this situation, forming a convention requires a joint policy improvement for both players.

For SimpleBidding ($N=16$), MiniBridge ($N=4,5$), we run Alg.~\ref{alg:tabular} with a search depth $D = 3$. For other games, we use maximal depth, i.e., from the starting infosets to the terminals. Note this does not involve all infosets, since at each depth only one active infoset exists. JPS never worsens the policy so we use its last solution. For A2C and BAD, we take the best model over 100 epoch (each epoch contains 1000 minibatch updates). Both A2C and BAD use a network to learn the policy, while CFR and \ours{} are tabular approaches. To avoid convergence issue, we report CFR performance after purifying CFR's resulting policy. The raw CFR performance before purification is slightly lower. 

As shown in Tbl.~\ref{tbl:tabular-result}, JPS consistently improves existing solutions in multiple games, in particular for complicated \ig{}s (e.g. 2-Suit Mini-Bridge). Please see Appendix C for a good solution found by JPS in 2-suited Bridge game. BAD~\cite{BAD} does well for simple games but lags behind JPS in more complicated \ig{}s. 

We also tried different combinations between JPS and other solvers. Except for Comm (Def.~\ref{def:comm-game}) that JPS always gets 1.0, uniform random+JPS converges to local minima that CFR is immune to, and under-performs CFR1k+JPS. Combining JPS with more CFR iterations (CFR10k) doesn't improve performance. Compared to CFR1k+JPS, BAD+JPS is worse ($10.47$ vs $10.56$ for $N=16$) in Simple Bidding but \emph{better} ($1.12/1.71/2.77$ vs $1.07/1.71/2.74$ for $N=3/4/5$) in 2-Suit Mini-Bridge. Note that this is quite surprising since the original solutions obtained from BAD are not great but JPS can boost them substantially. We leave these interesting interplays between methods for future study. 

\textbf{Correctness of Theorem~\ref{thm:infoset-decomposition} and runtime speed}. Experiments show that the game value difference $\bar v^{\sigma'}- \bar v^\sigma$ from Theorem~\ref{thm:infoset-decomposition} always coincides with naive computation, with much faster speed. We have compared JPS with brute-force search. For example, for each iteration in Simple Bidding (Def.~\ref{def:simple-bidding}), for $N=8$, JPS takes $\sim1$s while brute-force takes $\sim4$s (4x); for $N=16$ and $d=3$, JPS takes $\sim20$s while brute-force takes $\sim260$s (13x). For communication game (Def.~\ref{def:comm-game}), JPS enjoys a speedup of 3x for $L=4$. For 2-Suit Mini-Bridge of $N=4$, it achieves up to 30x. 

\begin{table}[t]
\centering
\caption{Performance on sample-based version of \ours{}. All CFR1k experiments are repeated 1000 times and all BAD experiments are repeated 50 times. Note that we use sample with replacement so it is possible to get multiple identical samples from one infoset.}
\small
\setlength{\tabcolsep}{2pt}
\begin{tabular}{|c||c|c|c|c|c|c|c|c|c|c|}
\hline 
    & Initialization & All states & \multicolumn{8}{c|}{\#Samples per infoset} \\
                      &  & & 1 & 2 & 5 & 8 & 15 & 20 & 25 & 30 \\ \hline\hline
Mini-Hanabi~\cite{BAD} & CFR1k~\cite{zinkevich2008regret}          & 9.50              & \textbf{10.00}          & 9.99           & 9.95           & 9.75           & 9.51  & 9.51 & 9.51 & 9.51 \\
\hline
SimpleBidding ($N=16$)  & CFR1k~\cite{zinkevich2008regret}          & 10.56             & 10.47     & 10.47     & 10.49     & 10.52     & 10.58 & 10.60  & 10.61 & \textbf{10.61}     \\ \hline
SimpleBidding ($N=16$)  & BAD~\cite{BAD}            & 10.47             & 9.91     & 9.95     & 10.22     & 10.34     & 10.50 & 10.55 & \textbf{10.57} & 10.55     \\ \hline
2-suited Bridge ($N=3$) & BAD~\cite{BAD}            & 1.12              & 0.89      & 1.12      & \textbf{1.13}      & 1.13      & 1.12 & 1.12 & 1.12 & 1.12  \\ \hline
2-suited Bridge ($N=4$) & BAD~\cite{BAD}            & 1.71              & 1.23      & 1.63      & 1.71     & \textbf{1.71}      & 1.68 & 1.67 &  1.68 & 1.69      \\ \hline
2-suited Bridge ($N=5$) & BAD~\cite{BAD}            & 2.77              & 2.12      & 2.51      & 2.74      & \textbf{2.79}     & 2.79  & 2.76 & 2.77 & 2.78    \\ \hline
\end{tabular}
\\
\label{tab:sample-based}
\end{table}

\begin{figure}
    \begin{subfigure}{.5\textwidth}
    \includegraphics[width=\textwidth]{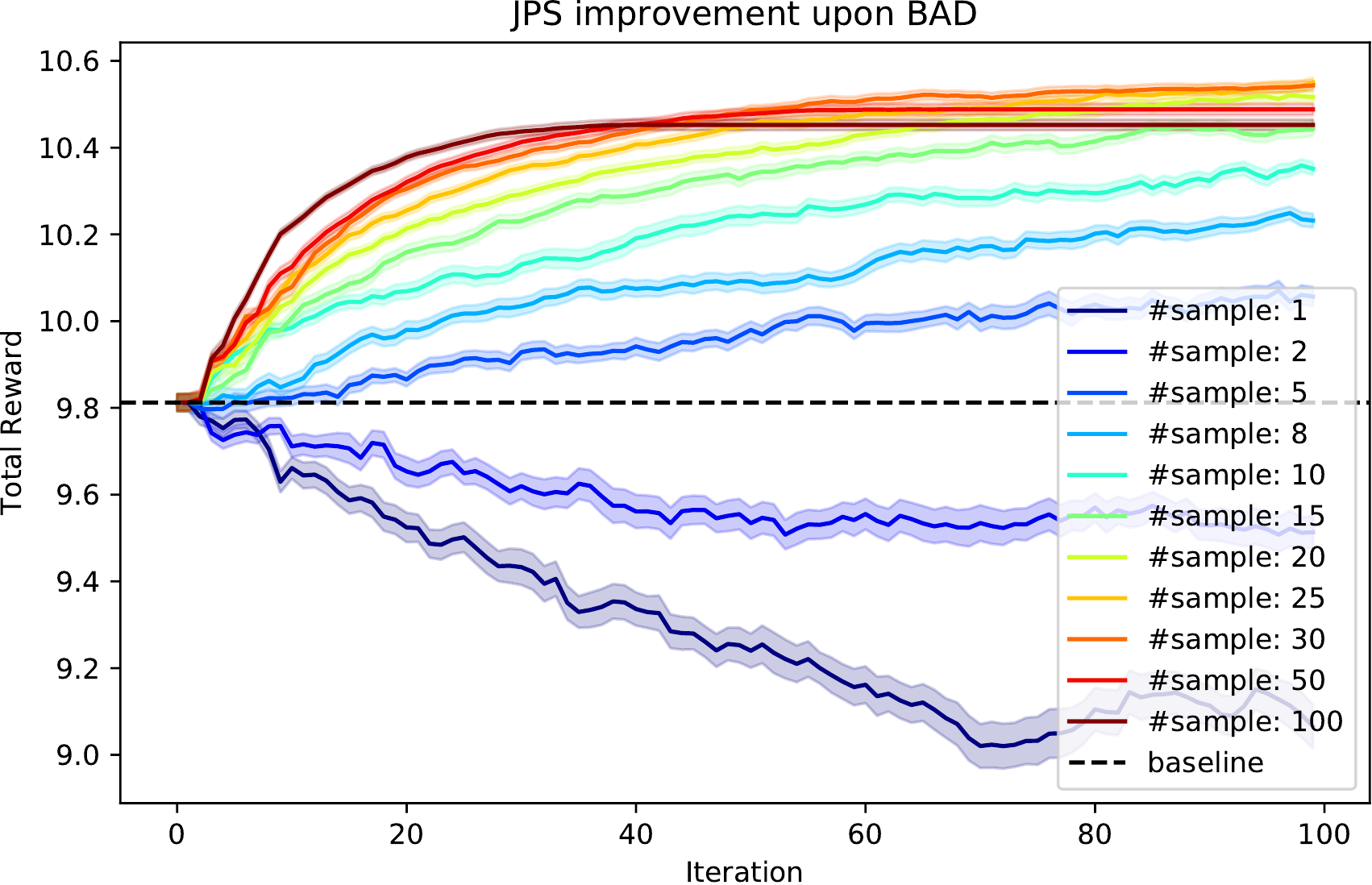}
    \end{subfigure}
    \begin{subfigure}{.5\textwidth}
    \includegraphics[width=\textwidth]{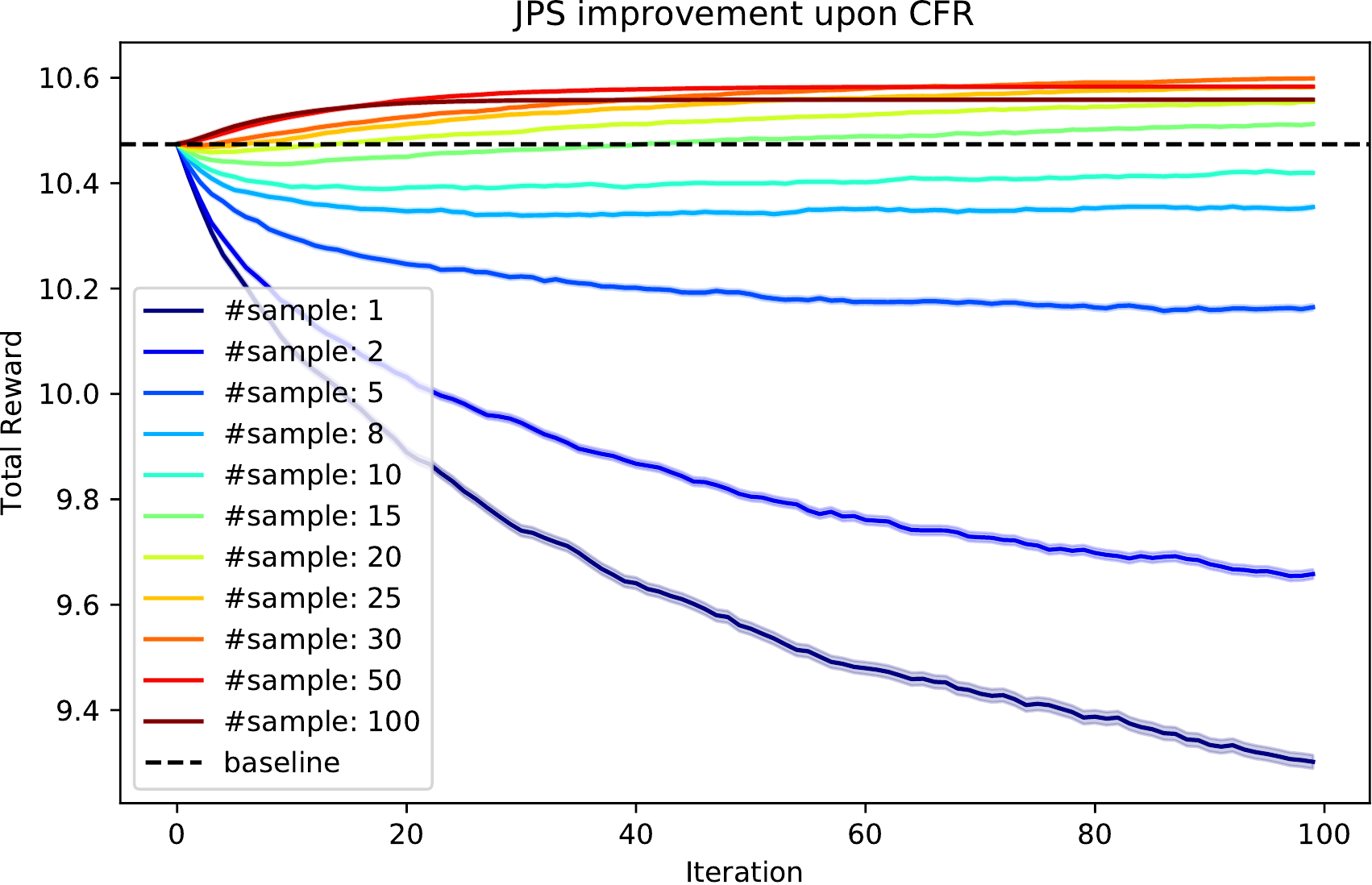}
    \end{subfigure}
    \caption{Improvement of sample-based JPS up on baseline solutions from BAD and CFR1k, using different number of samples per information set, in SimpleBidding ($N=16$). Dashed line is the averaged performance of baselines.}
    \label{fig:sample-based}
\end{figure}

\textbf{Sample-based \ours{}}. Note that Theorem~\ref{thm:infoset-decomposition} sums over all complete states $h\in I$ within each infoset $I$. This could be time-consuming, in particular for real-world games in which one infoset could contain (exponentially) many states. One extension is to sample complete states $h\in I$ and only sum over the samples. In this case, we only need to compute $v^\sigma(h)$ for the sampled state $h$ and their immediate children $v^\sigma(ha)$ using the current policy $\sigma$, saving computational resources. As a trade-off, the monotonity guarantee in Theorem~\ref{thm:performance-alg} breaks and after each iteration of sampled JPS, performance might go down. So we run JPS for 100 iterations and take the best performer (like what we did for BAD). 

Surprisingly, rather than a performance drop, sampled-based \ours{} performs similarly or even \emph{better} than the full-state version, as shown in Tbl.~\ref{tab:sample-based}. In some cases (e.g., Mini-Hanabi), with a single sample, the performance is good, achieving perfect collaboration ($10$), and with more samples the performance degrades towards $9.50$. 
This is due to the fact that sampling could help escape local optima to reach a better solution, while sampling many samples (with replacement) in each infoset would completely cover the infoset and reduce to ``all states'' case, which suffers from local optimum issue.
As a trade-off, for sample-based \ours{}, we lose the performance guarantee. After each iteration the expected return can fluctuate and sometimes collapse into bad solutions (and might jump back later), thus we report the best expected return over iterations. Note that both A2C and BAD are trained with samples (i.e., mini-batches), but their performances still fall behind. 

\section{Application to Contract Bridge Bidding}
In this section, we apply the online version of \ours{} (Sec.~\ref{sec:online-jps}) to the bidding phase of Contract Bridge (a 4-player game, 2 in each team), to improve collaboration between teammates from a strong baseline model. Note that we insert \ours{} in the general self-play framework to improve collaboration between teammates and thus from \ours{}'s point of view, it is still a fully collaborative \ig{} with fixed opponents. Unlike~\cite{baseline16} that only models 2-player collaborative bidding, our baseline and final model are for full Bridge Bidding.

Note that since Bridge is not a pure collaborative games and we apply an online version of JPS, the guarantees of Theorem.~\ref{thm:performance-alg} won't hold. 
 
\textbf{A Crash Course of Bridge Bidding}. The bidding phase of Contract Bridge is like Mini-Bridge (Def.~\ref{def:mini-bridge}) but with a much larger state space (each player now holds a hand with 13 cards from 4 suits). Unlike Mini-Bridge, a player has both her teammate and competitors, making it more than a full-collaborative \ig. Therefore, multiple trade-offs needs to be considered. Human handcrafted conventions to signal private hands, called \textit{bidding systems}. For example, opening bid 2$\h$ used to signal a very strong hand with hearts 
historically, but now signals a weak hand with long hearts. Its current usage blocks opponents from getting their best contract, which happens more frequently than its previous usage (to build a strong heart contract). Please see Appendix A for more details. 

\textbf{Evaluation Metric.} We adopt \emph{duplicate bridge} tournament format: each board (hands of all 4 players) is played twice, where a specific team sits North-South in one game (called open table), and East-West in another (called close table). The final reward is the difference of the results of two tables. This reduces the impact of card dealing randomness and can better evaluate the strength of an agent. 

We use IMPs (International Matching Point) per board, or \imp{}, to measure the strength difference between two Bridge bidding agents. See Appendix A for detailed definition. Intuitively, \imp{} is the normalized score difference between open and close table in duplicate Bridge, ranging from $-24$ to $+24$. In Compute Bridge, a margin of +0.1 \imp{} is considered significant~\cite{baseline19}. In a Bridge tournament, a forfeit in a game counts as $-3$ \imp{}. The difference between a top professional team and an advanced amateur team is about 1.5 \imp{}. 

\textbf{Reward}. We focus on the bidding part of the bridge game and replace the playing phase with Double Dummy Solver (DDS)~\cite{dds}, which computes the maximum tricks each team can get in playing, if all actions are optimal given full information. While this is not how humans plays and in some situations the maximum tricks can only be achieved with full-information, DDS is shown to be a good approximate to human expert plays \cite{baseline19}. Therefore, after bidding we skip the playing phase and directly compute \imp{} from the two tables, each evaluated by DDS, as the only sparse reward.

Note that Commercial software like Wbridge5, however, are not optimized to play under the DDS setting, and we acknowledge that the comparison with Wbridge5 is slightly unfair. We leave end-to-end evaluation including the playing phase as future work. 

\textbf{Dataset}. We generate a training set of 2.5 million hands, drawn from uniform distribution on permutations of 52 cards. We pre-compute their DDS results. The evaluation dataset contains 50k such hands. Both datasets will be open sourced for the community and future work.

\textbf{Baselines}. We use \bsix~\cite{baseline16}, \bnine~\cite{baseline19} and \bs~\cite{Gong2019SimpleIB} as our baselines, all are neural network based methods. See Appendix B for details of each baseline.  

\subsection{Network and Training}
\vspace{-0.1in}
We use the same network architecture as \bs{}, which is also similar to \bnine{}. As show in Fig. \ref{fig:net-and-training-curve}, the network consists of an initial fully connected layer, then 4 fully connected layer with skip connections added every 2 layers to get a latent representation. We use 200 neurons at each hidden layer, so it is much smaller (about 1/70 parameter size of \bnine{}). 
\begin{figure}[ht]
    \begin{subfigure}{.5\textwidth}
        \centering
        \includegraphics[width=0.8\linewidth]{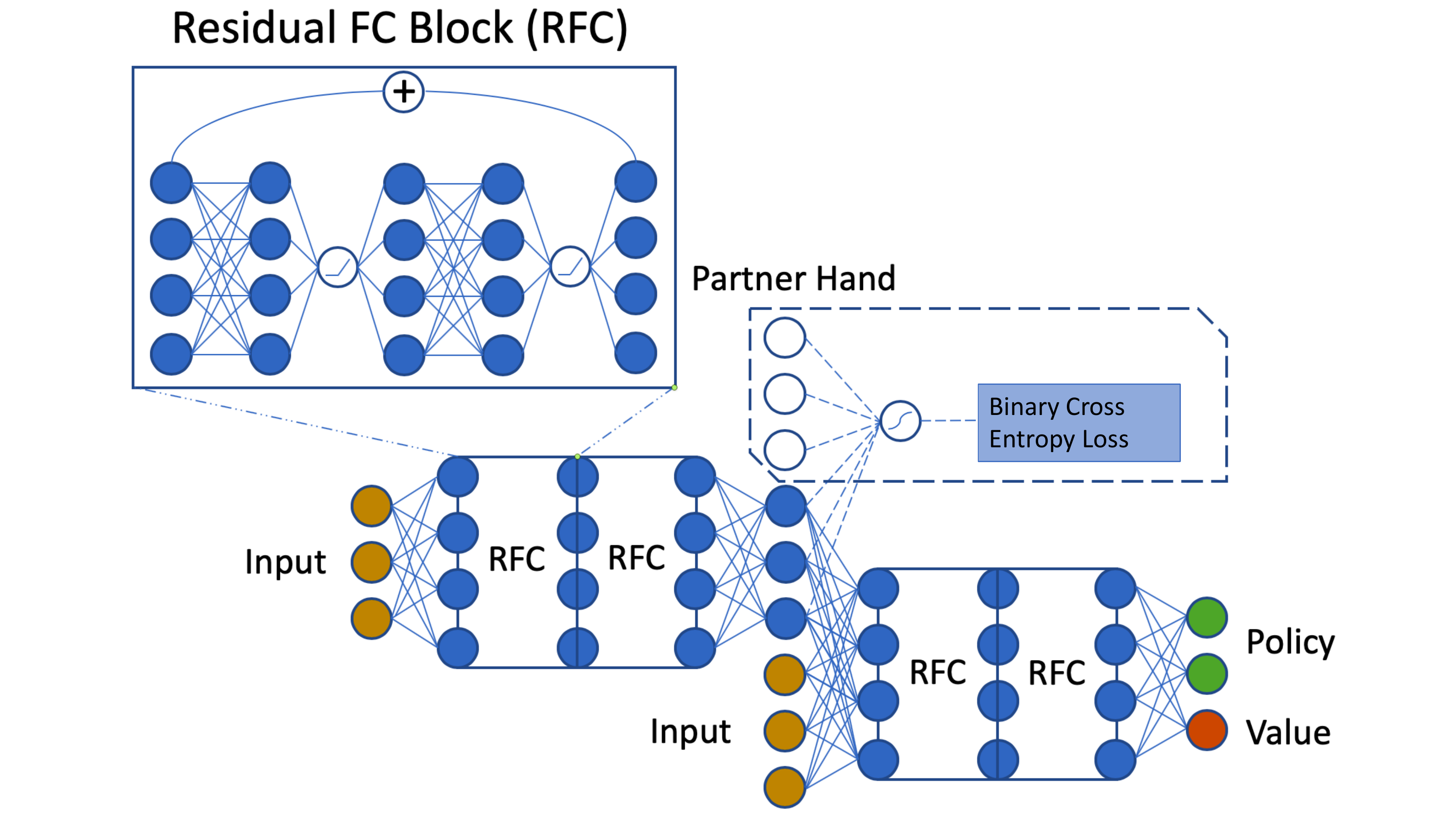}
        \label{fig:net}
    \end{subfigure}
    \begin{subfigure}{.5\textwidth}
        \centering
        \includegraphics[width=0.8\linewidth]{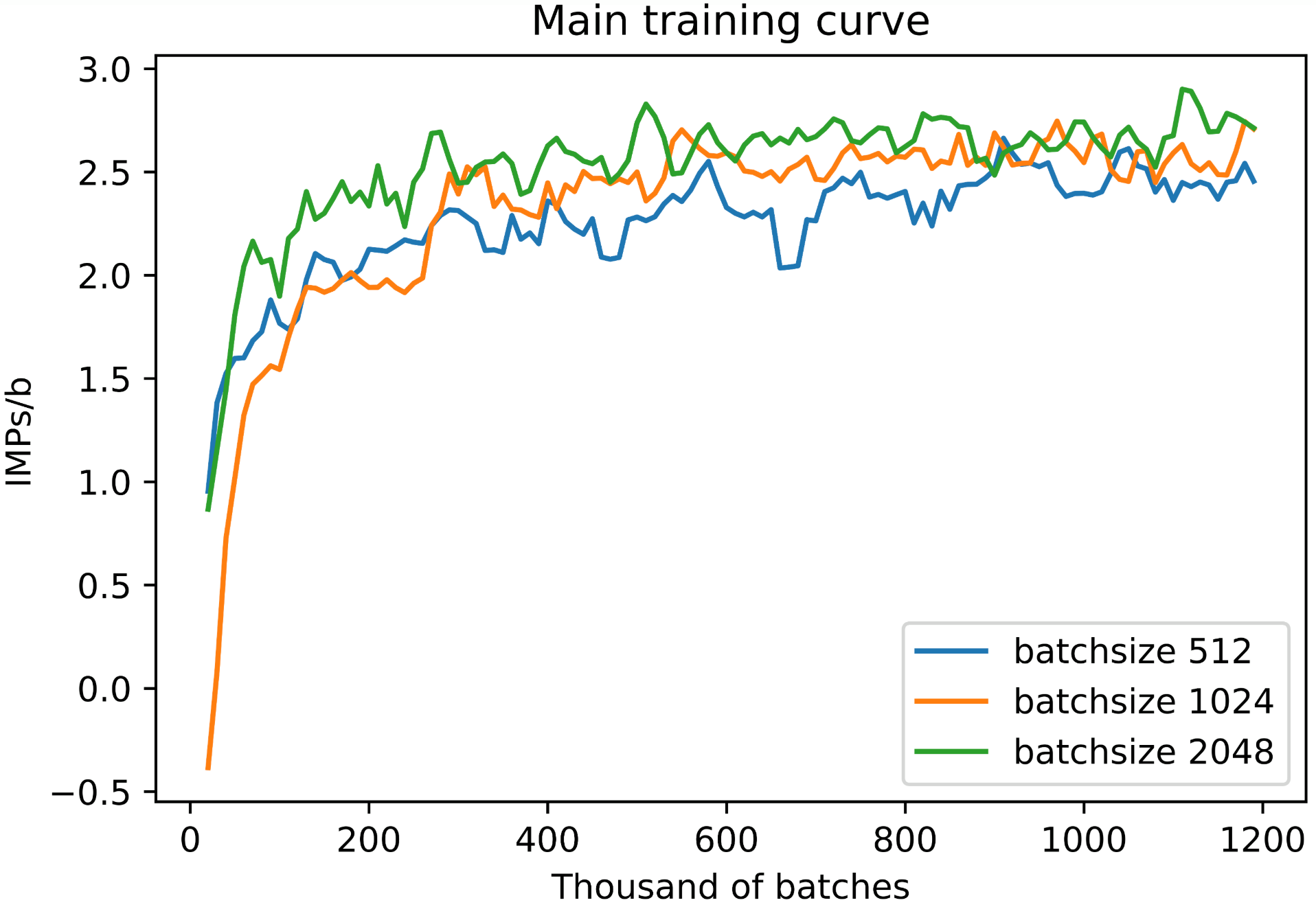}

        \label{fig:training-curve}
    \end{subfigure}
    \vspace{-0.05in}
    \caption{\small{Left: Network Architecture. Supervision from partner's hand is unused in the main results, and is used in the ablation studies.} Right: Smoothed training curves for different batchsizes.
    }
    \label{fig:net-and-training-curve}
\end{figure}

\textbf{Input Representation}. For network input, we use the same encoding as \bs{}. This includes 13 private cards, bidding sequence so far and other signals like vulnerability and legal actions. Please check Appendix~\ref{sec:input-rep} for details. The encoding is general without much domain-specific information. In contrast, \texttt{baseline19} presents a novel bidding history representation using positions in the maximal possible bidding sequence, which is highly specific to Contract Bridge.

\subsection{A Strong Baseline Model} 
We train a strong baseline model for 4-player Bridge Bidding with A2C~\cite{mnih2016asynchronous} with a replay buffer, importance ratio clipping and self-play. During training we run 2000 games in parallel, use batch size of 1024, an entropy ratio of 0.01 and with no discount factor. See Appendix E for details.

Fig.~\ref{fig:net-and-training-curve} shows example training curves against \bsix{}. We significantly outperform \texttt{baseline16} by a huge margin of +2.99 \imp{}. This is partially because \texttt{baseline16} cannot adapt well to competitive bidding setting. Also it can also only handle a fixed length of bids. We have performed an extensive ablation study to find the best combination of common tricks used in DRL. Surprisingly, some of them  believed to be effective in games, e.g., explicit belief modeling, have little impact for Bridge bidding, demonstrating that unilateral improvement of agent's policy is not sufficient.
See Appendix F for a detailed ablation study. 

\def\maxrollout{\texttt{1-search}}
\def\nonsearch{\texttt{non-search}}

\begin{table}[h]
    \centering
   \caption{\small{Fine-tuning RL pre-trained model with search applied on $1\%$ games or moves unless otherwise stated. Performance in \imp{}. 10 baselines are other independently trained actor-critic baselines.}}
    \begin{tabular}{c|c|c}
                         &  vs. baseline & vs. 10 baselines \\ 
        \hline
        \hline
        \nonsearch        &     0.20   & 0.27 $\pm$ 0.13     \\ 
        \maxrollout{}  &   0.46     & 0.37 $\pm$ 0.11 \\
        \hline
        \ours{} (1\%)  & \textbf{0.71}    & 0.47 $\pm$ 0.11 \\
        \ours{} (5\%)  & 0.70     & \textbf{0.66 $\pm$ 0.11} \\
        \ours{} (10\%)  & 0.44    & 0.39 $\pm$ 0.11 \\
    \end{tabular}
    \label{tab:search-vs-no-search}
    \vspace{-0.2in}
\end{table}

\subsection{\ouragent{}: Improving strong baseline models with \ours}
We then use \ours{} to further improve the strong baseline model. Similar to Sec.~\ref{sec:tabular-exp}, \ours{} uses a search depth of $D=3$: the current player's (P1) turn, the opponent's turn and the partner's (P2) turn. We only jointly update the policy of P1 and P2, assuming the opponent plays the current policy $\sigma$. After the P1's turn, we rollout 5 times to sample opponent's actions under $\sigma$. After P2's turn, we rollout 5 times following $\sigma$ to get an estimate of $v^\sigma(h)$. Therefore, for each initial state $h_0$, we run $5\times 5$ rollouts for each combination of policy candidates of P1 and P2. Only a small fraction (e.g., $5\%$) of the games stopped at some game state and run the search procedure above. Other games just follow the current policy $\sigma$ to generate trajectories, which are sent to the replay buffer to stabilize training. A game thread works on one of the two modes decided by rolling a dice.

We also try a baseline \maxrollout{} which only improve P1's policy (i.e., $D=1$). And \nonsearch{} baseline is just to reload the baseline model and continue A2C training. 

From the training, we pick the best model according to its \imp{} against the baseline, and compare with $10$ other baseline models independently trained with A2C with different random seeds. They give comparable performance against \bsix{}. 

Tbl.~\ref{tab:search-vs-no-search} shows a clear difference among \nonsearch{}, \maxrollout{} and \ours{}, in particular in their transfer performance against independent baselines. \ours{} yields much better performance ($+0.66$ \imp{} against 10 independent baselines). We can observe that \maxrollout{} is slightly better than \nonsearch{}. With \ours, the performance gains significantly. 

\textbf{Percentage of search}. Interestingly, performing search in too many games is not only computationally expensive, but also leads to model overfitting, since the trajectories in the replay buffer are infrequently updated. We found that 5\% search performs best against independent baselines. 

\textbf{Against WBridge5}. We train our bot with \ours{} for 14 days and play 1000 games between our bot and WBridge5, a software winning multiple world champion in 2005, 2007, 2008 and 2016. The 1000 games are separately generated, independent of training and evaluation set. We outperform by a margin of $+0.63$ \imp{} with a standard error of $0.22$ \imp{}. This translates to 99.8\% probability of winning in a standard match. This also surpasses the previous SoTAs \bs\cite{Gong2019SimpleIB} ($+0.41$ \imp{} evaluated on 64 games only), and \texttt{baseline19} ($+0.25$ \imp{}). Details in Appendix H. 

Note that we are fully aware of the potential unfairness of comparing with WBridge5 only at Bridge bidding phase. This includes that \textbf{(1)} WBridge5 conforms to human convention but JPS can be creative, \textbf{(2)} WBridge5 optimizes for the results of real Bridge playing rather than double-dummy scores (DDS) that assumes full information during playing, which is obviously very different from how humans play the game. In this paper, to verify our bot, we choose to evaluate against WBridge5, which is an independent baseline tested extensively with both AI and human players. A formal address of these issues requires substantial works and is left for future work.

\textbf{Visualzation of Learned models.} Our learned model is visualized to demonstrate its interesting behaviors (e.g., an aggressive opening table). We leave detailed discussion in the Appendix I. 

\section{Conclusion and Future Work}
In this work, we propose \ours{}, a general optimization technique to jointly optimize policy for collaborative agents in imperfect information game (\ig) efficiently. On simple collaborative games, tabular \ours{} improves existing approaches by a decent margin. Applying online \ours{} in competitive Bridge Bidding yields SoTA agent, outperforming previous works by a large margin ($+0.63$ \imp{}) with a $70\times$ smaller model under Double-Dummy evaluation. As future works, we plan to apply \ours{} to other collaborative \ig{}s, study patterns of sub-optimal equilibria, combine with belief modeling, and use more advanced search techniques.

\section{Broader Impact}
This work has the following potential positive impact in the society:
\begin{itemize}
    \item \ours{} proposes a general formulation and can be applied to multi-agent collaborative games beyond the simple games and Contract Bridge we demonstrate in the paper;
    \item \ours{} can potentially encourage more efficient collaboration between agents and between agents and humans. It might suggest novel coordination patterns, helping jump out of existing (but sub-optimal) social convention.
\end{itemize}
We do not foresee negative societal consequences from \ours{}. 

\bibliography{reference}
\bibliographystyle{plain}

\clearpage

\appendix
\title{\textit{Supplementary Material for} \\ Joint Policy Search for Multi-agent Collaboration with imperfect Information}

\section{The Contract Bridge Game}
The game of Contract Bridge is played with a standard 52-card deck (4 suits, $\spadesuit$, $\heartsuit$, $\diamondsuit$ and $\clubsuit$, with 13 cards in each suit) and 4 players (North, East, South, West). North-South and East-West are two competitive teams. Each player is dealt with 13 cards. 

There are two phases during the game, namely \textbf{bidding} and \textbf{playing}. After the game, \textbf{scoring} is done based on the won tricks in the playing phase and whether it matches with the contract made in the bidding phase. An example of contract bridge bidding and playing in shown in Fig.~\ref{fig:bid-play}.

\begin{figure}[h]
\centering
\includegraphics[width=0.75\textwidth]{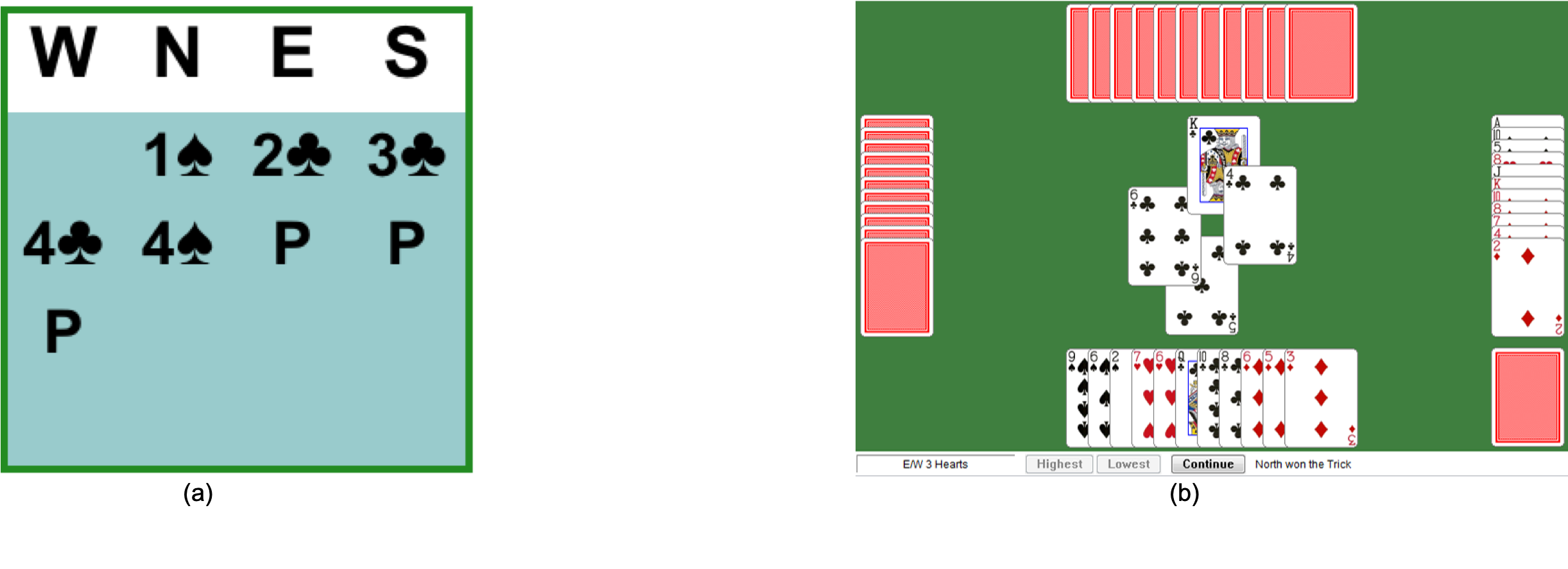}
\caption{\small{\textbf{(a)} A bidding example. North-South prevail and will declare the contract 4$\s$. During the bidding, assuming natural bidding system, the bid 1$\s$, 2$\cc$, 4$\cc$ and 4$\s$ are natural bids, which shows lengths in the nominated suit. The bid 3$\cc$ is an artificial bid, which shows a good hand with $\s$ support for partner, and shows nothing about the $\cc$ suit. To make the contract, North-South needs to take 10 tricks during the playing phase. \textbf{(b)} A playing example. Currently shown is the 2nd round of the playing phase. The dummy's card is visible to all players, and controlled by his partner, declarer. In the current round North player wins with $\cc$K, and will lead the next round.}}
\label{fig:bid-play}
\end{figure}

\textbf{Bidding phase}. During the bidding phase, each player takes turns to bid from 38 available actions. The sequence of bids form an auction. There are 35 contract bids, which consists a level and a strain, ranging from an ordered set \{${1\cc}, {1\dd}, {1\h}, {1\s}, 1NT, {2\cc},.. 7NT$\} where NT stands for No-Trump. The level determines the number of tricks needed to make the contract, and the strain determines the trump suit if the player wins the contract. Each contract bid must be either higher in level or higher in strain than the previous contract bids. 

There are also 3 special bids. \textbf{Pass (P)} is always available when a player is not willing to make a contract bid. Three consecutive passes will terminate the auction, and the last contract bid becomes the final contract, with their side winning the contract. If the auction has 4 Passes, then the game ends with reward $0$ and restarts. \textbf{Double (X)} can be used when either opponent has made a contract bid. It will increase both the contract score, if the declarer makes the contract, and the penalty score for not making the contract. Originally this is used when a player has high confidence that opponent's contract cannot be made, but it can also be used to communicate information. Finally, \textbf{Redouble (XX)} can be used by the declaring team to further amplify the risk and/or reward of a contract, if the contract is doubled. Similarly, this bid can also be used to convey other information.

\textbf{Playing phase}. After the bidding phase is over, the contract is determined, and the owner of the final contract is the declarer. His partner becomes dummy. The other partnership is the defending side. During the playing phase, there are 13 rounds and each rounds the player plays a card. The first round starts with the defending side, and then dummy immediately lays down his cards, and the declarer can control the cards of both himself and dummy. The trump suit is designated by the strain of the final contract (Or None if the strain is NT). Each round, every player has to follow suit. If a player is out of a certain suit, he can play a trump card to beat it. Discarding other suits is always losing in this round. The player who played the highest ranked card (or play a trump) wins a trick, and will play first in the next round. The required number of tricks for the declarer's team to make the contract is contract level + 6 (e.g., $1\spadesuit$ means that 7 tricks are needed). At the end of the game, if the declaring side wins enough tricks, they make the contract. Tricks in addition to the required tricks are called over-tricks. If they fail to make the contract, the tricks short are called under-tricks. 

\textbf{Scoring}.
if the contract is made, the declaring side will receive contract score as a reward, plus small bonuses for over-tricks. Otherwise they will receive negative score determined by under-tricks. Contracts below $4\h$ (except 3NT) are called \textbf{part score contracts}, with relatively low contract scores. Contracts $4\h$ and higher, along with 3NT, are called \textbf{game contracts} with a large bonus score. Finally, Contract with level 6 and 7 are called \textbf{small slams} and \textbf{grand slams} respectively, each with a huge bonus score if made. To introduce more variance, \emph{vulnerability} is randomly assigned to each board to increase bonuses/penalties for failed contracts. 

In \emph{duplicate bridge}, two tables are played with exact the same full-hands. Two players from one team play North-South in one table, and two other players from the same team play East-West in the other table. After raw scores are assigned to each table, the difference is converted to IMPs scale \footnote{\url{https://www.acbl.org/learn_page/how-to-play-bridge/how-to-keep-score/duplicate/}} in a tournament match setting, which is roughly proportional to the square root of the raw score, and ranges from 0 to 24. 

\section{Baselines}
Previous works tried applying DRL on Bridge bidding. 

\bsix~\cite{baseline16} uses DRL to train a bidding model in the collaborative (2-player) setting. It proposes Penetrative Bellman's Equation (PBE) to make the Q-function updates more efficient. The limitation is that PBE can only handle fixed number of bids, which are not realistic in a normal bridge game setting. As suggested by the authors, we modify their pre-trained model to bid competitively (i.e., 4 players), by bidding PASS if the cost of all bids are greater than 0.2. We implement this and further fix its weakness that the model sometimes behaves randomly in a competitive setting if the scenario can never occur in a collaborative setting. We benchmark against them at each episode.  

\bnine~\cite{baseline19} proposes two networks, Estimation Neural Network (ENN) and Policy Neural Network (PNN) to train a competitive bridge model. ENN is first trained supervisedly from human expert data, and PNN is then learned based on ENN. After learning PNN and ENN from human expert data, the two networks are further trained jointly through reinforcement learning and selfplay. PBE claims to be better than Wbridge5 in the collaborative (2-player) setting, while PNN and ENN outperforms Wbridge5 in the competitive (4-player) setting. We could not fully reproduce its results so we cannot directly compare against \bnine{}. However, since both our approach and \bnine{} have compared against WBridge5, we indirectly compare them.

\bs~\cite{Gong2019SimpleIB} is trained with large-scale A2C, similar to our approach, but without the JPS improvement. Furthermore, when evaluating with WBridge5, only 64 games are used. We indirectly compare them thought performance against Wbridge5 on 1000 games.

Policy Belief Learning (PBL)~\cite{tian19} proposes to alternately train between policy learning and belief learning over the whole self-play process. Like~\bsix{}, the Bridge agent obtained from PBL only works in collaborative setting.

\section{Trained Policy on 2-Suit MiniBridge}
We show a learned policy with \ours{} on 2-suit Mini-Bridge with $N=4$ in Tbl. \ref{tab:mini-bridge} , which received the maximal score ($1.84$). We find that the learned policy did well to bid optimal contracts in most scenarios. On the anti-diagonal (0/4 and 4/0 case in the table), no contracts can be made, but in order to explore possible high reward contracts, the agent have to bid, leading to some overbid contracts.

\begin{table}[h]
    \centering 
    \caption{\small{Trained Policy on 2-Suit Mini-Bridge. Rows are the number of $\heartsuit$s of Player 1 and Columns are the number of $\heartsuit$s of Player 2.}}
    \begin{tabular}{c|c|c|c|c|c}
    &0  &  1 & 2 & 3 & 4    \\
    \cmidrule(lr){1-6}
    0 & 1$\spadesuit$-2$\spadesuit$-3$\spadesuit$-4$\spadesuit$-P & 1$\spadesuit$-2$\spadesuit$-3$\spadesuit$-P & 1$\spadesuit$-2$\heartsuit$-2$\spadesuit$-P & 1$\spadesuit$-P& 1$\spadesuit$-2$\heartsuit$-2$\spadesuit$-4$\heartsuit$-4$\spadesuit$-P \\
1 & P-2$\spadesuit$-3$\spadesuit$-P& P-1$\spadesuit$-2$\spadesuit$-P& P-P& P-P& P-1$\heartsuit$-P \\
2 & P-2$\spadesuit$-P& P-1$\spadesuit$-P& P-P& P-P& P-1$\heartsuit$-2$\heartsuit$-P \\
3 & 1$\heartsuit$-1$\spadesuit$-P& 1$\heartsuit$-P& 1$\heartsuit$-P& 1$\heartsuit$-2$\heartsuit$-P& 1$\heartsuit$-3$\heartsuit$-P \\
4 & 1$\heartsuit$-1$\spadesuit$-4$\heartsuit$-4$\spadesuit$-P& 1$\heartsuit$-P& 1$\heartsuit$-P& 1$\heartsuit$-2$\heartsuit$-3$\heartsuit$-P & 1$\heartsuit$-3$\heartsuit$-3$\spadesuit$-4$\heartsuit$-P \\
    \end{tabular}
    \label{tab:mini-bridge}
\end{table}

\begin{figure*}[h]
\centering
\includegraphics[width=1.0\textwidth]{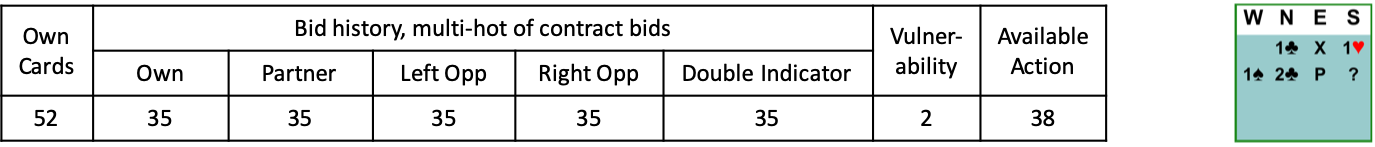}
\caption{Input representation. With the decision point shown in the example, South will mark the following bits in the bidding history encoding: 1$\h$ in "Own" segment, 1$\cc$ and 2$\cc$ in "Partner" segment, $1\s$ in "Left Opp" segment, and $1\cc$ in "Double Indicator" segment.}
\label{fig:input-rep}
\end{figure*} 

\section{Input Representation}
\label{sec:input-rep}
We encode the state of a bridge game to a 267 bit vector as shown in Fig.~\ref{fig:input-rep}. The first 52 bits indicate that if the current player holds a specific card. The next 175 bits encodes the bidding history, which consists of 5 segments of 35 bits each. These 35 bit segments correspond to 35 contract bids. The first segment indicates if the current player has made a corresponding bid in the bidding history. Similarly, the next 3 segments encodes the contract bid history of the current player's partner, left opponent and right opponent. The last segment indicates that if a corresponding contract bid has been doubled or redoubled. Since the bidding sequence can only be non-decreasing, the order of these bids are implicitly conveyed. The next 2 bits encode the current vulnerability of the game, corresponding to the vulnerability of North-South and East-West respectively. Finally, the last 38 bits indicates whether an action is legal, given the current bidding history. 

Note that the representation is \emph{imperfect recall}: from the representation the network only knows some bid is doubled by the opponent team, but doesn't know which opponent doubles that bid. We found that it doesn't make a huge difference in terms of final performance. 

\section{Training Details}
We train the model using Adam with a learning rate of 1e-4. During training we use multinominal exploration to get the action from a policy distribution, and during evaluation we pick the greedy action from the model. We also implement a replay buffer of size 800k, and 80k burn in frames to initialize the replay buffer. 

\textbf{RL Method and Platform Implementation}. We use selfplay on random data to train our baseline models. The baseline model is trained with A2C \cite{mnih2016asynchronous} with replay buffer, off-policy importance ratio correction/capping and self-play, using ReLA platform\footnote{\url{https://github.com/facebookresearch/rela}}. ReLA is an improved version of ELF framework \cite{tian2017elf} using PyTorch C++ interface (i.e., TorchScript). ReLA supports off-policy training with efficient replay buffer. The game logic of Contract Bridge as well as feature extraction steps are implemented in C++ and runs in parallel to make the training fast. Each player can call different models directly in C++ and leaves action trajectories to the common replay buffer, making it suitable for multi-agent setting. The training is thus conducted in a separated Python thread by sampling batches from the replay buffer, and update models accordingly. The updated model is sent back to the C++ side for further self-play, once every \textbf{Sync Frequency} minibatches.

We improve the open source version of ReLA to support dynamic batching in rollouts and search. Unlike ELF that uses thousands of threads for simulation, we now put multiple environments in a single C++ thread to reduce the context-switch cost, while the dynamic batching mechanism can still batch over these environments, using a mechanism provided by \textbf{std::promise} and \textbf{std::future}. This gives $\sim$ 11x speedup compared to a version without batching. The platform is efficient and can evaluate 50k games using pre-trained models in less than a minute on a single GPU. During training, to fill in a replay buffer of 80k transitions, it takes $2.5$ seconds if all agents play with current policy, and $\sim 1$ minute if all agents use \ours{} in 100\% of its actions. The whole training process takes roughly 2 days on 2-GPUs. We also try training a $14$-day version of JPS model.

\section{Ablation Studies}
\subsection{A2C baseline}
We perform extensive ablation studies for A2C self-play models, summarized in Tbl. \ref{tab:perf}. Our attempts to improve its performance by applying existing methods and tuning hyper-parameters yield negative results. 

One example is explicit \textbf{belief modeling} (e.g., with auxiliary loss~\cite{baseline19} or alternating training stages~\cite{tian19}), we found that it doesn't help much in Bridge bidding. We use $L= r L_{belief} + L_{A2C}$ as the loss, where $r$ is a hyper-parameter to control the weight on the auxiliary task, As shown in Table~\ref{tab:perf}, when $r=0$, the model reaches the best performance and the performance decreases as $r$ increase. This shows that it might be hard to move out of local minima with auxiliary loss, compared to search-based approaches. Adding \textbf{more blocks} of FC network cannot further improve its performance, showing that model capacity is not the bottleneck. The performance is similar when the \textbf{sync frequency} is large enough. 

\begin{table}[t]
\centering
\caption{\small{Performance Comparison. The table compares performance when giving different weights to the belief loss and other hyper-parameters such as number of RFC blocks in the network and actor sync frequency.}}

\label{tab:perf}
\begin{tabular}{c|c|c|c|c|c}
    \toprule
    Ratio r    &   imps $\pm$ std & Num Blocks    &   imps $\pm$ std & Sync Frequency     &   imps $\pm$ std  \\
    \cmidrule(lr){1-6}
    0       &   \textbf{2.99 $\pm$ 0.04} &  2       &   2.97 $\pm$ 0.05 & 1       &   2.89 $\pm$ 0.13 \\
    0.001    &   2.86 $\pm$ 0.18 & 4    &   \textbf{2.99 $\pm$ 0.04}&  6      &   2.92 $\pm$ 0.16 \\
    0.01    &   2.77 $\pm$ 0.22 & 10     &   2.94 $\pm$ 0.15 & 12     &   2.94 $\pm$ 0.14 \\
    0.1     &   2.53 $\pm$ 0.27 & 20     &   \textbf{2.99 $\pm$ 0.06} & 50     &  \textbf{2.99 $\pm$ 0.04} \\

    \end{tabular}
    \vspace{-0.1in}
\end{table}

\subsection{Joint Policy Exploration}
It is possible that Joint Policy Search (\ours{}) works just because it encourages joint exploration. To distinguish the two effects, we also run another baseline in which the agent and its partner explore new actions simultaneously but randomly. We find that this hurts the performance, compared to independent exploration. This shows that optimizing the policy of the current player and its partner jointly given the current policy is important for model improvement. 

\begin{table}[h]
    \centering 
    \caption{\small{Joint Exploration hurts the performance.}}
    \begin{tabular}{c|c}
    Joint Random Exploration Ratio  &   imps $\pm$ std    \\
    \cmidrule(lr){1-2}
    0    & \textbf{2.99 $\pm$ 0.04}              \\
    0.001       &   2.43 $\pm$ 0.20    \\
    0.01       &   2.37 $\pm$ 0.31    \\
    \end{tabular}
    \label{tab:search-joint-exploration}
\end{table}

\section{Details of competing with WBridge5 and additional results}
\textbf{Experimental settings.} We compare with WBridge5, which is an award-winning close-sourced free software\footnote{\url{http://www.wbridge5.com/}}. Since it can only run on Microsoft Windows, we implement a UI interface to mimic keyboard and mouse moves to play against WBridge5. Our model controls one player and its partner, while WBridge5 controls the other two players (in summary, 2 \ouragent{} are teamed up against 2 WBridge5 agents). Note that the two players cannot see each other's private information, while their model architecture and parameters are shared. For each model, we use 1000 different board situations and compare its mean estimate (in \imp{}) and standard error of the mean estimate. These 1000 board situations are generated as a separate test set from the training and validation set.

Table~\ref{tab:more-results} shows the performance. Interestingly, while $5\%$ \ouragent{} gives good performance when comparing against 10 independent baselines, it is slightly worse than $1\%$ version when competing with WBridge5. This is likely due to insufficient self-play data produced by expensive rollout operations that involve search. 

\begin{table}[h]
    \centering
    \vspace{-0.1in}
    \caption{\small{Performance against WBridge5.}}
    \begin{tabular}{c|c}
               &  Vs. WBridge5 (\imp{}) \\
    \hline
    A2C baseline  &  $0.29\pm 0.22$ \\
    1\% search, \ouragent{} (2 days) & $0.44 \pm 0.21$ \\
    1\% search, \ouragent{} (14 days) & $\mathbf{0.63 \pm 0.22}$ \\
    5\% search, \ouragent{} (2 days) & $0.38 \pm 0.20$
    \end{tabular}
    \label{tab:more-results}
    \vspace{-0.1in}
\end{table}

\section{Statistics of learned models} 
\subsection{Bidding Statistics}
It is interesting to visualize what the model has learned, and understand some rationales behind the learned conventions. In Fig. \ref{fig:bid_length} and Tbl. \ref{tab:bidding-vis}, we show the bidding length distribution and frequency of each bid used, as well as the distribution of final contracts. We can see that typically agents exchange 6-15 rounds of information to reach the final contract. The agent uses low level bids more frequently and puts an emphasis on $\h$ and $\s$ contracts. The final contract is mostly part scores and game contracts, particularly 3NT, 4$\h$ and 4$\s$. This is because part scores and game contracts are optimal based on DDS for 87\% of hands\footnote{\url{ https://lajollabridge.com/Articles/PartialGameSlamGrand.htm}}. As a result, the model will optimize to reach these contracts.

\begin{figure}
    \centering
    \includegraphics[width=0.4\textwidth]{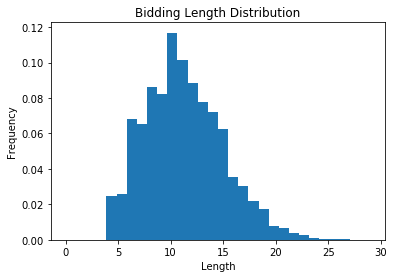}
    \vspace{-0.1in}
    \caption{\small{Bidding length histogram.}}
    \vspace{-0.1in}
    \label{fig:bid_length}
\end{figure}

\begin{table}[h]
    \centering 
        \caption{\small{Most frequent bids and final contracts.}}

    \begin{tabular}{c|c|c|c}
    Bids  &  Frequency & Final Contracts & Frequency   \\
\cmidrule(lr){1-4}
    P   & 57.31\%   & 2$\h$ & 8.07\%     \\
    1$\cc$   & 3.74\%   &2$\s$ & 7.83\%     \\
    1$\s$   & 3.23\%   &1NT & 7.71\%    \\
    X    & 3.16\%   & 3$\dd$ & 7.34\%  \\
    2$\h$   & 3.10\%   &3NT & 6.58\% \\ 
    2$\s$   & 2.84\%   &4$\h$ & 5.90\%  \\
    1NT   & 2.84\%   & 4$\s$ & 5.23\%    \\
    \end{tabular}
    \label{tab:bidding-vis}
\end{table}

\begin{table}[t]
    \centering
    \caption{\small{Opening table comparisons. ``bal'' is abbreviation for a balanced distribution for each suit.}}
    \begin{tabular}{c|c|c}
    Opening bids & Ours & SAYC   \\
    \cmidrule(lr){1-3}
    $1\cc$& 10+ HCP         & 12+ HCP, 3+$\cc$ \\
    $1\dd$&  8-18 HCP, <4 $\h$, <4 $\s$ & 12+ HCP, 3+$\dd$ \\
    $1\h$ &  4-16 HCP, 4-6$\h$ & 12+ HCP, 5+$\h$ \\
    $1\s$ &  4-16 HCP, 4-6$\s$ & 12+ HCP, 5+$\s$ \\
    1NT   & 12-17 HCP, bal   & 15-17 HCP, bal \\
    $2\cc$& 6-13 HCP, 5+$\cc$  & 22+ HCP \\
    $2\dd$& 6-13 HCP, 5+$\dd$  & 5-11 HCP, 6+$\dd$ \\
    $2\h$ & 8-15 HCP, 5+$\h$   & 5-11 HCP, 6+$\h$ \\ 
    $2\s$ & 8-15 HCP, 5+$\s$   & 5-11 HCP, 6+$\s$ \\
    \end{tabular}
    \label{tab:opening} 
\end{table}

\subsection{Opening Table} 
There are two mainstream bidding systems used by human experts. One is called \emph{natural}, where opening and subsequent bids usually shows length in the nominated suit, e.g. the opening bid $1\h$ usually shows 5 or more $\h$ with a decent strength. The other is called \emph{precision}, which heavily relies on relays of bids to partition the state space into meaningful chunks, either in suit lengths or hand strengths, so that the partner knows the distribution of the private card better. For example, an opening bid of $1\cc$ usually shows 16 or more High Card Points (HCP)\footnote{High Card Points is a heuristic to evaluate hand strength, which counts A=4, K=3, Q=2, J=1}, and a subsequent 1$\h$ can show 5 or more $\s$.
To further understand the bidding system the model learns, it is interesting to establish an opening table of the model, defined by the meaning of each opening bid. We select one of the best models, and check the length of each suit and HCP associated with each opening bid. From the opening table, it appears that the model learns a semi-natural bidding system with very aggressive openings (i.e., high bid even with a weak private hand).

\section{Proofs}
\subsection{Lemma~\ref{lemma:density}}
\begin{proof}
Let $I = I(h)$, since $\sigma(I, a) = \sigma'(I, a)$, we have:
\begin{eqnarray}
    \sum_{a\in A(I)} \vc^{\sigma,\sigma'}(ha) &:=& \sum_{a\in A(I)} (\pi^{\sigma'}(ha) - \pi^{\sigma}(ha))\vv^\sigma(ha) \\
    &=& (\pi^{\sigma'}(h) - \pi^{\sigma}(h)) \sum_{a\in A(I)}\sigma(I, a)\vv^\sigma(ha) \\
    &=& (\pi^{\sigma'}(h) - \pi^{\sigma}(h))\vv^\sigma(h) \\
    &=& \vc^{\sigma,\sigma'}(h)
\end{eqnarray}
Therefore, $\vrho^{\sigma,\sigma'}(h) := -\vc^{\sigma,\sigma'}(h) + \sum_{a\in A(I)} \vc^{\sigma,\sigma'}(ha) = \vzero$.
\end{proof}

\subsection{Subtree decomposition}
\begin{lemma}
\label{lemma:general-2}
For a perfect information subtree rooted at $h_0$, we have:
\begin{equation}
    \pi^{\sigma'}(\vv^{\sigma'} - \vv^{\sigma})|_{h_0} = \sum_{h_0\sqsubseteq h\notin Z} \vrho^{\sigma,\sigma'}(h)
\end{equation}
\end{lemma}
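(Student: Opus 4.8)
The plan is to prove the identity by a telescoping argument over the perfect-information subtree, collapsing the interior sum of densities to boundary terms, and then recognizing those boundary terms as reachability-weighted value functions. Here I read the left-hand side $\pi^{\sigma'}(\vv^{\sigma'}-\vv^{\sigma})|_{h_0}$ as the shorthand $\pi^{\sigma'}(h_0)\bigl(\vv^{\sigma'}(h_0)-\vv^{\sigma}(h_0)\bigr)$. Throughout I assume $h_0\notin Z$; the terminal case is trivial, since then the right-hand sum is empty and the left-hand side vanishes because $\vv^{\sigma'}(h_0)=\vv^\sigma(h_0)=\vr(h_0)$.

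First I would expand the right-hand side using $\vrho^{\sigma,\sigma'}(h) = -\vc^{\sigma,\sigma'}(h) + \sum_{a\in A(h)}\vc^{\sigma,\sigma'}(ha)$ and regroup by which node each $\vc$ term is attached to. In a perfect-information tree every non-root node has a unique parent, so the collection of child terms $\{\vc^{\sigma,\sigma'}(ha)\}$, as $h$ ranges over non-terminal nodes with $h_0\sqsubseteq h$, enumerates exactly the nodes $h'$ with $h_0\sqsubset h'$, each exactly once. The subtracted terms $\{\vc^{\sigma,\sigma'}(h)\}$ range over all non-terminal nodes with $h_0\sqsubseteq h$. Telescoping, the interior non-terminal contributions cancel, and only two kinds of boundary terms survive: the root term $-\vc^{\sigma,\sigma'}(h_0)$, and the terminal-leaf terms $\sum_{h_0\sqsubseteq z\in Z}\vc^{\sigma,\sigma'}(z)$.

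Next I would substitute $\vc^{\sigma,\sigma'}(\cdot)=(\pi^{\sigma'}(\cdot)-\pi^{\sigma}(\cdot))\vv^\sigma(\cdot)$ and factor the terminal reachabilities through $h_0$. Using $\pi^\sigma(z)=\pi^\sigma(h_0)\,\pi^\sigma(z\mid h_0)$ (valid inside a perfect-information subtree, where the path from $h_0$ to $z$ is unique), together with $\vv^\sigma(h_0)=\sum_{h_0\sqsubseteq z\in Z}\pi^\sigma(z\mid h_0)\vr(z)$ and the analogous identity for $\sigma'$, the terminal sum collapses to $\pi^{\sigma'}(h_0)\vv^{\sigma'}(h_0)-\pi^{\sigma}(h_0)\vv^{\sigma}(h_0)$. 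Adding the root term $-\vc^{\sigma,\sigma'}(h_0)=-(\pi^{\sigma'}(h_0)-\pi^{\sigma}(h_0))\vv^\sigma(h_0)$ cancels the two $\pi^\sigma(h_0)\vv^\sigma(h_0)$ pieces and leaves exactly $\pi^{\sigma'}(h_0)\bigl(\vv^{\sigma'}(h_0)-\vv^\sigma(h_0)\bigr)$, which is the left-hand side.

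I expect the main obstacle to be the bookkeeping of the telescoping step: arguing cleanly that every node strictly below $h_0$ is hit exactly once as a child and thereby cancels its own $-\vc$ term, so that only the root and the terminal leaves remain. This is precisely where perfect information is essential, since the unique-parent property is what makes the child-indexed and node-indexed sums align. An alternative, possibly cleaner, route would be structural induction on the subtree, with base case $h_0\in Z$ and an inductive step that applies the identity to each child subtree rooted at $h_0 a$ and recombines the results through the definition of $\vrho^{\sigma,\sigma'}$ at $h_0$; but the direct telescoping above avoids carrying an inductive hypothesis and makes the cancellation transparent.
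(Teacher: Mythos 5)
Your proof is correct, and it runs the telescoping in the opposite direction from the paper's. The paper starts from the left-hand side: it expands $\vv^{\sigma'}(h_0)=\sum_{z\in Z}\pi^{\sigma'}(z|h_0)\vv(z)$, telescopes $\pi^{\sigma'}(h,z|h_0)\vv^\sigma(h)$ along each root-to-leaf path separately, and then reassembles the per-terminal sums into the density sum via the marginalization identity $\sum_{z:\,h\sqsubseteq z}\pi^{\sigma'}(h,z|h_0)=\pi^{\sigma'}(h|h_0)$; the density $\vrho^{\sigma,\sigma'}$ only emerges at the end, in the rewritten form of Eqn.~\ref{eq:rho-computation}, after factoring out $\pi^{\sigma'}(h)$. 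You instead start from the right-hand side and perform a single node-wise Abel summation: each interior node's $-\vc^{\sigma,\sigma'}$ term is cancelled by its appearance as a child term $+\vc^{\sigma,\sigma'}$ in its unique parent's density, leaving only the root term $-\vc^{\sigma,\sigma'}(h_0)$ and the leaf terms $\sum_{z}\vc^{\sigma,\sigma'}(z)$, which you convert to $\pi^{\sigma'}(h_0)\bigl(\vv^{\sigma'}(h_0)-\vv^\sigma(h_0)\bigr)$ using the terminal-expectation identity for both policies. What your route buys: it works directly with the definitional form of $\vrho^{\sigma,\sigma'}$ in Eqn.~\ref{eq:c-rho-definition} rather than the rewritten form, it avoids the joint reachabilities $\pi^{\sigma'}(h,z|h_0)$ and the associated marginalization bookkeeping entirely, and it makes the boundary structure of the identity (interior cancels, root and leaves survive) transparent — the same structure underlying the locality claim of Lemma~\ref{lemma:density}. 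The small price is that you invoke the expansion $\vv^\sigma(h_0)=\sum_{z}\pi^\sigma(z|h_0)\vr(z)$ for both $\sigma$ and $\sigma'$, whereas the paper applies it only for $\sigma'$. One cosmetic remark: the unique-parent property you flag as the crux holds for any tree of histories in an extensive-form game (it is the infosets, not the histories, that overlap), so ``perfect information'' here just signals that the sum ranges over complete states $h$ rather than infosets, which is exactly the setting of the lemma.
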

\begin{proof}
First by definition, we have for any policy $\sigma'$:
\begin{equation}
\vv^{\sigma'}(h_0) = \sum_{z\in Z} \pi^{\sigma'}(z|h_0)\vv(z) \label{eq:lemma2-1}
\end{equation}
where $\pi^{\sigma'}(z|h_0) := \pi^{\sigma'}(z) / \pi^{\sigma'}(h_0)$ is the \emph{conditional} reachability from $h_0$ to $z$ under policy $\sigma'$. Note that $\vv(z)$ doesn't depend on policy $\sigma'$ since $z$ is a terminal node. 

We now consider each terminal state $z$. Consider a path from game start $h_0$ to $z$: $[h_0, h_1, \ldots, z]$. With telescoping sum, we could write:
\begin{equation}
    \pi^{\sigma'}(z|h_0) \vv(z) = \pi^{\sigma'}(h_0, z|h_0)\vv^\sigma(h_0) + \sum_{h:\ h\sqsubseteq z, ha\sqsubseteq z} \pi^{\sigma'}(ha, z|h_0)\vv^\sigma(ha) - \pi^{\sigma'}(h, z|h_0)\vv^\sigma(h)
\end{equation}
where $\pi^{\sigma'}(h, z|h_0)$ is the joint probability that we reach $z$ through $h$, starting from $h_0$. Now we sum over all possible terminals $z$ that are descendants of $h_0$ (i.e., $h_0 \sqsubseteq z$). Because of the following,
\begin{itemize}
    \item From Eqn.~\ref{eq:lemma2-1}, the left-hand side is $\vv^{\sigma'}(h_0)$;
    \item For the right-hand side, note that $\sum_{z:\ h \sqsubseteq z} \pi^{\sigma'}(h, z|h_0) = \pi^{\sigma'}(h|h_0)$. Intuitively, this means that the reachability of $h$ is the summation of all reachabilities of the terminal nodes $z$ that are the consequence of $h$. 
\end{itemize}
we have:
\begin{equation}
    \vv^{\sigma'}(h_0) = \pi^{\sigma'}(h_0|h_0)\vv^\sigma(h_0) +\sum_{h_0\sqsubseteq h\notin Z} \sum_{a\in A(h)}\pi^{\sigma'}(ha|h_0)\vv^\sigma(ha) - \pi^{\sigma'}(h|h_0)\vv^\sigma(h)
\end{equation}
Notice that $\pi^{\sigma'}(h_0|h_0) = 1$ and if we multiple both side by $\pi^{\sigma'}(h_0)$, we have:
\begin{eqnarray}
\pi^{\sigma'}(\vv^{\sigma'} - \vv^{\sigma})|_{h_0} &=& \sum_{h_0\sqsubseteq h\notin Z} \sum_{a\in A(h)}\pi^{\sigma'}(ha)\vv^\sigma(ha) - \pi^{\sigma'}(h)\vv^\sigma(h) \\
&=& \sum_{h_0\sqsubseteq h\notin Z} \pi^{\sigma'}(h)\sum_{a\in A(h)}\sigma'(I(h), a)\vv^\sigma(ha) - \vv^\sigma(h) \\
&=& \sum_{h_0\sqsubseteq h\notin Z} \vrho^{\sigma,\sigma'}(h)
\end{eqnarray}
This concludes the proof.
\end{proof}

\subsection{Lemma~\ref{lemma:traj-decomposition}}
\begin{proof}
Applying Lemma~\ref{lemma:general-2} and set $h_0$ to be the game start. Then $\pi^{\sigma'}(h_0) = 1$ and all $h$ are descendant of $h_0$ (i.e., $h_0 \sqsubseteq h$):
\begin{equation}
\bar \vv^{\sigma'} - \bar \vv^{\sigma} = \sum_{h\notin Z} \vrho^{\sigma,\sigma'}(h)
\end{equation}
\end{proof}

\subsection{Thm.~\ref{thm:infoset-decomposition}}
\begin{proof}
By Lemma~\ref{lemma:traj-decomposition}, we have:
\begin{equation}
    \bar \vv^{\sigma'} - \bar \vv^{\sigma} = \sum_{h\notin Z} \vrho^{\sigma,\sigma'}(h) = \sum_I\sum_{h\in I} \vrho^{\sigma,\sigma'}(h)
\end{equation}
By Lemma~\ref{lemma:density}, for all infoset set $I$ with $\sigma(I) = \sigma'(I)$, all its perfect information states $h\in I$ has $\vrho^{\sigma,\sigma'}(h) = \vzero$. The conclusion follows.
\end{proof}

\subsection{Thm.~\ref{thm:performance-alg}}
\begin{proof}
    According to Thm.~\ref{thm:infoset-decomposition}, Alg.~\ref{alg:tabular} computes $\bar v^{\sigma'} - \bar v^{\sigma}$ correctly for each policy proposal $\sigma'$ and returns the best $\sigma^*$. Therefore, we have 
    \begin{equation}
        \bar v^{\sigma^*} - \bar v^{\sigma} = \max_{\sigma'} \bar v^{\sigma'} - \bar v^{\sigma} \ge 0
    \end{equation}
\end{proof}

\end{document}